\documentclass{article}
\usepackage{iclr2027_conference,times}

\usepackage[hyphens]{url}
\usepackage[hyperfootnotes=false]{hyperref}
\usepackage{graphicx}
\usepackage{multirow}
\usepackage{array}
\usepackage{enumitem}

\usepackage{amsmath,amssymb}
\usepackage{amsthm}
\usepackage{rotating}
\usepackage{placeins}
\usepackage{capt-of}
\newtheorem{proposition}{Proposition}
\usepackage{booktabs}
\usepackage{cleveref}
\usepackage{makecell}

\title{SPARC: Subspace Position-Aware Robust Few-Shot Calibration for Distribution-Shifted Industrial Anomaly Detection}
\author{
Seokhee Han\thanks{Equal contribution.} \\
Dartmouth College
\And
Seungjun Chu\footnotemark[1] \\
Korea University
\And
Mateusz Nowak \\
Dartmouth College
\And
Peter Chin \\
Dartmouth College
}

\iclrfinalcopy
\begin{document}

\maketitle
\lhead{arXiv preprint}

\begin{abstract}
Vision-based industrial anomaly detectors are calibrated on one distribution but may be deployed on another that differs in illumination, fixture placement, or sensor characteristics, sharply degrading an otherwise accurate detector. Adapting to the incoming lot is a natural response, but labeled anomalies are scarce. We therefore consider calibration using only a handful of verified-normal images available before scoring the rest of the lot. Existing fixes require backpropagation, detector-specific tuning, or choices about feature directions that few calibration samples cannot justify. We present SPARC, a few-shot calibration method that intercepts patch features between encoder and detector and removes a closed-form, spatially indexed estimate of deployment-time nuisance through per-cell subspace projection. It needs only $k \le 8$ verified-normal images and uses the algebraic saturation rank $r{=}k{-}1$ on the encoder's native patch grid. The correction requires no gradient or weight updates and works with memory-bank, density, prototype, and mutual detectors. On the shift-prone benchmarks, SPARC improves pooled Image AUROC and AU-PRO$_{0.3}$ for all seven detectors whose image scores depend on corrected patch features by $+13.8$ and $+3.5$ percentage points (pp), respectively; on benchmarks without engineered shift, the changes are small and mixed. Controls that give competing corrections the same calibration images attribute these gains to the per-cell subspace structure rather than the images alone. Further ablations support the saturation-rank choice and characterize sensitivity to backbone and calibration conditions.
\end{abstract}

\section{Introduction}
\label{sec:intro}

\begin{figure}[t]
\centering
\includegraphics[width=\columnwidth]{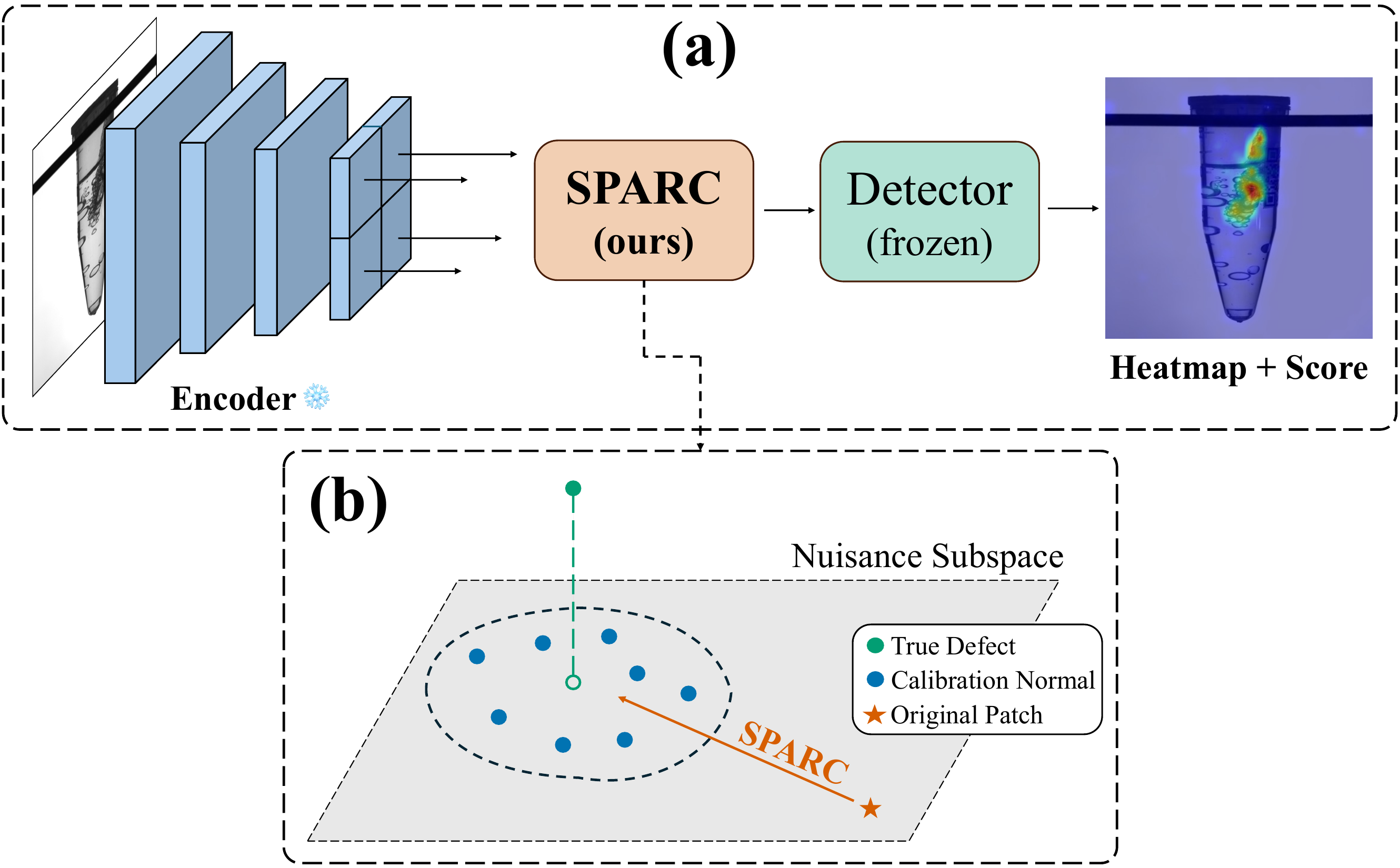}
\caption{\textbf{Overview of SPARC.}
\textbf{(a)}~The pipeline consists of a frozen encoder, SPARC, and a frozen off-the-shelf detector.
\textbf{(b)}~At each cell, SPARC projects out an at-most-$(k{-}1)$-dimensional disagreement subspace estimated from $k{\le}8$ verified normals.}
\label{fig:overview}
\end{figure}

Reliable defect detection is central to industrial quality control as missed defects increase downstream costs and can compromise safety in high-stakes manufacturing.
Modern vision-based inspection commonly combines a frozen pretrained encoder with an anomaly detector fitted on normal images, assuming that deployment and training distributions remain aligned.
In practice, changes in illumination, viewpoint, object pose, scale, background, and imaging conditions often violate this assumption, creating a substantial gap between benchmark and shop-floor performance.
Recent industrial benchmarks confirm that shifts in illumination or viewpoint can substantially degrade otherwise strong anomaly detectors~\citep{zhang2023industrial,hecklerkram2026mvtecad2}.

This deployment shift motivates recalibrating the detector using observations from the incoming lot.
Existing approaches do so in three ways.
\emph{Test-time fine-tuning} of the encoder or detector~\citep{wang2021tent,wang2022cotta,sun2020ttt} requires backpropagation and hyperparameters, and risks overfitting to scarce normal samples.
\emph{Prompt-based methods}~\citep{zhou2024anomalyclip,li2024promptad,jeong2023winclip} require vision-language backbones and introduce prompt-specific design or adaptation.
\emph{Detector-specific calibration}~\citep{nado2020evaluating,schneider2020improving,liao2025robust} adjusts internal statistics or representations for the deployment distribution, but it typically requires detector-specific design choices and offers no principled, data-driven way to determine which feature variation should be removed.
More broadly, a fixed mean correction cannot capture image-dependent nuisance in direction and magnitude, while a more flexible correction may remove discriminative variation. 
SPARC addresses this tradeoff with a rank-limited, detector-agnostic correction fitted in closed form from a few verified-normal images from the incoming lot, without anomaly labels or gradients.
It limits removal to their per-cell disagreement subspace, which it treats as removable deployment nuisance under a per-cell additive-nuisance assumption.
A shared constant offset is absorbed into the per-cell mean and left untouched.

As described in \Cref{fig:overview}, SPARC uses a per-cell SVD of the centered calibration matrix to estimate and remove the within-lot disagreement subspace, whose dimension is at most $k{-}1$.
It uses the algebraic saturation rank $r{=}k{-}1$, operates on the encoder's native patch grid, and applies the projector symmetrically to reference and test features when a training-derived reference is present, requiring no held-out validation.
The result is a detector-agnostic correction module for memory-bank, density, prototype, and mutual detectors.
Reference-based integrations require cached training features and a one-time reference rebuild per deployment lot.
On the shift-prone MVTec AD~2 and AeBAD-S, SPARC improves pooled Image AUROC and AU-PRO$_{0.3}$ for all seven detectors whose image scores depend on corrected patch features.
On VisA and RAD, which contain no engineered shift, the changes are smaller and mixed.

Our contributions are as follows:
\begin{itemize}
    \item We derive SPARC, a lightweight, closed-form per-cell correction that calibrates frozen anomaly detectors from a few verified-normal images without gradient updates or validation-based tuning.

    \item We evaluate four datasets and nine detector configurations, showing broad detection and localization gains under engineered shifts and smaller, mixed changes otherwise.

    \item We compare SPARC with few-shot and feature-level adaptation baselines, while controlled ablations characterize the per-cell design and its sensitivity to backbone and calibration conditions.
\end{itemize}

\section{Related Work}
\label{sec:related}

\subsection{Industrial Anomaly Detection}
Industrial anomaly detection (IAD) identifies rare defective samples from predominantly normal training data~\citep{bergmann2019mvtec,zou2022visa,jezek2021mpdd,wang2024realiad}, usually via reconstruction-, density-, or feature-memory-based approaches.
Reconstruction methods flag anomalies through residuals but can reconstruct defects too well, suppressing the very residuals that reveal them~\citep{zavrtanik2021draem,wyatt2022anoddpm}.
Density methods score low-probability patches under likelihood, flow, or Gaussian models~\citep{rudolph2021differnet,gudovskiy2022cflow,defard2021padim}.
Feature-memory methods such as SPADE and PatchCore compare test embeddings against stored normal features~\citep{cohen2020spade,roth2022patchcore}, increasingly with CLIP- or DINO-based encoders for better transfer and few-shot performance~\citep{radford2021clip,oquab2023dinov2,jeong2023winclip,zhou2024anomalyclip,damm2025anomalydino}.
These detectors perform well off the shelf but assume aligned training and deployment feature distributions~\citep{roth2022patchcore,jeong2023winclip,damm2025anomalydino}, an assumption routinely violated in deployment~\citep{recht2019imagenet,taori2020measuring}.

\subsection{Benchmarks and Deployment Shift}
AeBAD~\citep{zhang2023industrial} evaluates industrial anomaly detection under changes in background, illumination, and viewpoint between training and deployment.
Under its reported evaluation protocol, PatchCore attains $99.1\%$ image-level area under the ROC curve (Image AUROC) on MVTec AD but $71.0\%$ on AeBAD-S.
MVTec AD~2~\citep{hecklerkram2026mvtecad2} introduces advanced industrial scenarios together with unseen test-time lighting conditions.
Its benchmark reports that PatchCore's area under the per-region overlap curve up to an FPR of $0.3$ (AU-PRO$_{0.3}$) decreases from $92.7\%$ on MVTec AD to $53.8\%$ on MVTec AD~2.
Although these cross-benchmark gaps conflate domain shift with differences in categories, defects, image geometry, and overall difficulty, within-benchmark controls confirm substantial degradation under viewpoint shift in AeBAD-S and unseen illumination in MVTec AD~2.
These controlled deployment shifts define the regime SPARC targets.

\subsection{Test-time Adaptation and Calibration Under Distribution Shift}
Distribution shift has been addressed through domain adaptation, test-time training, entropy minimization, normalization-statistic adaptation, and continual adaptation~\citep{sun2020ttt,wang2021tent,nado2020evaluating,schneider2020improving,wang2022cotta,niu2022eata}.
Most of these approaches require ingredients unavailable in few-shot calibration.
Test-time training, entropy minimization, and continual adaptation require updatable parameters, while cross-domain alignment requires separate source and target feature distributions.
These methods therefore cannot be applied unmodified to a frozen detector with only a few verified-normal deployment images.
Normalization-statistic adaptation remains directly applicable, but it models the shift through global or low-order feature statistics rather than a spatially varying patch-level subspace~\citep{nado2020evaluating,schneider2020improving}.
Prompt-based adaptation offers another alternative but is limited to vision-language backbones~\citep{zhou2022coop,zhou2022cocoop,jeong2023winclip,zhou2024anomalyclip}.

CORAL~\citep{sun2016coral} provides a related feature-level approach when source features are available.
It aligns global second-order statistics, whereas SPARC estimates spatially varying low-rank nuisance subspaces.

\subsection{Subspace Methods for Robust Feature Correction}
Subspace projection is well established in dimensionality reduction, domain alignment, and nuisance removal~\citep{eckart1936approximation,jolliffe2002pca,gong2012geodesic,fernando2013subspace,candes2011robust}.
PCA and SVD identify dominant directions, while methods such as geodesic flow kernels and subspace alignment relate low-dimensional source and target subspaces~\citep{gong2012geodesic,fernando2013subspace}.
Such alignment methods generally assume access to sufficiently many samples and require selecting a subspace dimension or alignment structure.
These choices are difficult to validate in few-shot deployment because no defect examples reveal which directions should be preserved~\citep{huang2022regad,xie2023graphcore,fang2023fastrecon}.
SPARC instead estimates a spatially indexed disagreement subspace from the deployment normals alone.
It adopts the algebraic saturation rank $r=k{-}1$, which retains every nonzero disagreement direction without validation-based rank selection.

\subsection{Few-Shot Industrial Anomaly Detection}
Few-shot IAD targets settings in which only a few verified-normal samples are available for a new product or lot~\citep{huang2022regad,xie2023graphcore,jeong2023winclip}.
Most such methods construct or refine a query-conditioned normal reference from those samples.
FastRecon reconstructs a query feature map from support normals and scores the reconstruction residual, and FastRef refines normal prototypes with query features, transferring query characteristics through a learned transform while suppressing anomalies via optimal transport~\citep{fang2023fastrecon,li2026fastref}.
Rather than constructing a query-specific reference, SPARC removes deployment-time nuisance before a frozen detector applies its existing scoring rule.

\section{Method}
\label{sec:method}

Let a pretrained encoder $\Phi$ map an image $x \in \mathbb{R}^{H \times W \times 3}$ to patch features $z = \Phi(x) \in \mathbb{R}^{H' \times W' \times d}$, which an off-the-shelf anomaly detector uses for scoring.
The detector is fitted on a source distribution and deployed to a lot that may differ in illumination, fixture placement, or sensor characteristics.
We assume access to $2 \le k \le 8$ verified-normal calibration images $\mathcal{C} = \{x^{(1)}, \dots, x^{(k)}\}$ from this lot.

SPARC is a gradient-free module inserted between the frozen encoder and detector.
It estimates per-cell disagreement subspaces from $\mathcal{C}$ and removes the corresponding components from patch features.
SPARC uses rank $r{=}k{-}1$ on the encoder's native patch grid and applies the correction symmetrically when the detector uses a training-derived reference.

\subsection{Deployment-time Nuisance Assumption}
\label{sec:method-assumption}
We model the shift between the deployment and training distributions with an additive per-cell nuisance.
Let $f_{i,j}^{(\ell)} \in \mathbb{R}^d$ be the latent feature at spatial cell $(i,j)$ in deployment image $\ell$ (calibration images $\ell = 1, \dots, k$).

Then,
\begin{equation}
\label{eq:nuisance}
f_{i,j}^{(\ell)}
=
f_{i,j}^{\text{train}}
+
\eta_{i,j}^{(\ell)}
+
\epsilon_{i,j}^{(\ell)},
\qquad
\eta_{i,j}^{(\ell)} \in \mathcal{S}_{i,j},
\end{equation}
where $f_{i,j}^{\text{train}}$ is the cell's canonical feature under the training distribution and is constant across $\ell$.
The term $\eta_{i,j}^{(\ell)}$ represents an image-dependent deployment-time nuisance constrained to a cell-specific low-dimensional subspace $\mathcal{S}_{i,j} \subset \mathbb{R}^d$.
The residual $\epsilon_{i,j}^{(\ell)}$ captures remaining feature variation.
Any component shared across all $k$ images merges with $f_{i,j}^{\text{train}}$ into the per-cell mean and is left in place, which is why the Per-cell Subspace Estimator uses centered features.
The centered calibration matrix cannot separate deployment nuisance from intrinsic variation among normal images.
SPARC therefore treats the full subspace spanned by the observed per-cell disagreement as removable deployment nuisance.

Two aspects of \eqref{eq:nuisance} motivate the design of SPARC:

First, the nuisance is modeled as a low-dimensional subspace rather than a fixed offset.
A single offset removes the same vector from every image and cannot capture nuisance that varies in direction or magnitude across images, whereas a shared subspace with image-specific coefficients can.

Second, the model allows the nuisance subspace to differ across spatial cells.
For example, an illumination change may shift edge features differently from interior features, so a single global subspace need not fit all locations.

\subsection{Per-cell Subspace Estimator}
\label{sec:method-estimator}

For each spatial cell $(i,j)$, we stack the $k$ calibration features into
\begin{equation}
X_{i,j}
=
\begin{bmatrix}
f_{i,j}^{(1)} &
f_{i,j}^{(2)} &
\cdots &
f_{i,j}^{(k)}
\end{bmatrix}^{\top}
\in \mathbb{R}^{k \times d}.
\end{equation}
We define the per-cell mean and centered calibration matrix as
\begin{equation}
\bar{\mu}_{i,j}
=
\frac{1}{k}\sum_{\ell=1}^{k} f_{i,j}^{(\ell)},
\qquad
\widetilde{X}_{i,j}
=
X_{i,j}
-
\mathbf{1}_{k}\bar{\mu}_{i,j}^{\top}.
\end{equation}
The $\ell$th row of $\widetilde{X}_{i,j}$ is therefore
$f_{i,j}^{(\ell)}-\bar{\mu}_{i,j}$.

Let $q=\operatorname{rank}(\widetilde{X}_{i,j})$.
Because the centered rows sum to zero,
\begin{equation}
q \leq \min(k{-}1,d).
\end{equation}
When $d\geq k{-}1$, equality holds if the calibration features are in general affine position.
We compute the reduced SVD over the nonzero singular values
\begin{equation}
\label{eq:svd}
\begin{aligned}
\widetilde{X}_{i,j}
&=
U_{i,j}\Sigma_{i,j}W_{i,j}^{\top},\\
W_{i,j}
&=
\begin{bmatrix}
w_{i,j,1} & \cdots & w_{i,j,q}
\end{bmatrix}
\in\mathbb{R}^{d\times q},\\
W_{i,j}^{\top}W_{i,j}
&=
I_q.
\end{aligned}
\end{equation}
For a requested projection rank $r$, let $s=\min(r,q)$ and define
\begin{equation}
V_{i,j}
=
\begin{bmatrix}
w_{i,j,1} & \cdots & w_{i,j,s}
\end{bmatrix}.
\end{equation}
The columns of $V_{i,j}$ span the retained empirical disagreement subspace
$\widehat{\mathcal{S}}_{i,j}^{(r)}=\operatorname{span}(V_{i,j})$, which SPARC treats as removable nuisance under the assumption above.
Consequently, $V_{i,j}V_{i,j}^{\top}$ is the orthogonal projector onto $\widehat{\mathcal{S}}_{i,j}^{(r)}$.

SPARC corrects a test feature by removing the component of its deviation from the calibration mean that lies in this subspace
\begin{equation}
\label{eq:correct}
f_{i,j}^{\mathrm{corr}}
=
f_{i,j}^{\mathrm{test}}
-
V_{i,j}V_{i,j}^{\top}
\bigl(f_{i,j}^{\mathrm{test}}-\bar{\mu}_{i,j}\bigr).
\end{equation}
Equivalently, the corrected feature retains the calibration mean and the component of the test deviation orthogonal to the estimated subspace.
All orthogonal directions remain unchanged, and the host detector applies its original scoring rule to the corrected feature.

\subsubsection{Choosing the Rank}
Since $q\leq k{-}1$, setting $r=k{-}1$ gives $s=q$ and retains every nonzero disagreement direction.
We therefore use
\begin{equation}
\label{eq:rank}
r=k-1.
\end{equation}
This choice requires no held-out validation.
For this default, we write
$\widehat{\mathcal{S}}_{i,j}
=
\widehat{\mathcal{S}}_{i,j}^{(k-1)}$.

\subsubsection{Spatial Grid}
SPARC estimates one subspace for each cell of the encoder's native patch grid $H'\!\times\!W'$.
It introduces no separate spatial grid and requires no grid-size tuning.

\begin{table*}[!t]
\caption{\textbf{Experimental results.} Per benchmark, we report the uncorrected (\emph{base}) and SPARC-corrected (\emph{+S}) Image AUROC and AU-PRO$_{0.3}$ scores at $k{=}8$. All $\Delta$ values are paired SPARC-minus-base differences computed before rounding and reported in percentage points (pp). The pooled columns aggregate category-level $\Delta$ values across the two shift-prone benchmarks (MVTec AD~2 and AeBAD-S) and the two shift-free benchmarks (VisA and RAD), so each category receives equal weight. \textbf{Bold} marks the larger of \emph{base} and \emph{+S} within each dataset cell. AnomalyCLIP$_\mathrm{V}$ and AnomalyCLIP$_\mathrm{M}$ score images from the CLS token, so the per-cell correction leaves their Image AUROC unchanged. It still applies to their patch features and therefore affects AU-PRO.}
\label{tab:main}
\centering
\setlength{\tabcolsep}{1.0pt}
\resizebox{\textwidth}{!}{%
\begin{tabular}{l >{\raggedright\arraybackslash}p{5.5cm} | rrr | rrr | rrr | rrr | rr}
\toprule
& & \multicolumn{3}{c|}{\textbf{MVTec AD~2}} & \multicolumn{3}{c|}{\textbf{AeBAD-S}} & \multicolumn{3}{c|}{\textbf{VisA}} & \multicolumn{3}{c|}{\textbf{RAD}} & \multicolumn{2}{c}{\textbf{Pooled $\Delta$}} \\
\cmidrule(lr){3-5}\cmidrule(lr){6-8}\cmidrule(lr){9-11}\cmidrule(lr){12-14}\cmidrule(lr){15-16}
\textbf{Backbone} & \textbf{Method} & base & +S & $\Delta$ & base & +S & $\Delta$ & base & +S & $\Delta$ & base & +S & $\Delta$ & \shortstack{shift\\prone} & \shortstack{shift\\free} \\
\midrule
\multicolumn{16}{@{}l}{\textit{Image AUROC}} \\
WRN-50 & PatchCore \citep{roth2022patchcore} & 68.7 & \textbf{83.6} & +14.8 & 64.1 & \textbf{66.3} & +2.2 & \textbf{94.3} & 93.9 & $-$0.3 & 92.1 & \textbf{96.2} & +4.0 & +10.6 & +0.8 \\
WRN-50 & PaDiM \citep{defard2021padim} & 59.9 & \textbf{81.6} & +21.7 & 61.6 & \textbf{62.5} & +1.0 & 89.3 & \textbf{89.6} & +0.3 & 97.5 & \textbf{97.8} & +0.3 & +14.8 & +0.3 \\
WRN-50 & SPADE \citep{cohen2020spade} & 53.8 & \textbf{71.0} & +17.2 & 38.3 & \textbf{43.6} & +5.2 & 83.7 & \textbf{84.2} & +0.5 & 28.9 & \textbf{34.7} & +5.8 & +13.2 & +1.8 \\
DINOv2-S/14 & AnomalyDINO \citep{damm2025anomalydino} & 71.7 & \textbf{82.0} & +10.3 & 71.7 & \textbf{73.5} & +1.8 & \textbf{95.3} & 93.7 & $-$1.6 & \textbf{97.2} & 97.0 & $-$0.2 & +7.5 & $-$1.3 \\
DINOv2-G & SubspaceAD \citep{lendering2026subspacead} & 70.9 & \textbf{85.8} & +14.9 & 73.6 & \textbf{88.1} & +14.5 & 98.1 & \textbf{98.2} & +0.1 & 96.3 & \textbf{96.4} & +0.1 & +14.8 & +0.1 \\
CLIP-B/16 & WinCLIP \citep{jeong2023winclip} & 62.9 & \textbf{79.3} & +16.4 & 75.8 & \textbf{76.4} & +0.6 & 75.4 & \textbf{76.6} & +1.1 & 80.5 & \textbf{83.4} & +3.0 & +11.1 & +1.6 \\
CLIP-L/14 & MuSc \citep{li2024musc} & 51.4 & \textbf{84.5} & +33.1 & 30.7 & \textbf{39.0} & +8.2 & \textbf{89.1} & 87.6 & $-$1.4 & \textbf{65.5} & 62.3 & $-$3.2 & +24.8 & $-$1.9 \\
CLIP-L/14 & AnomalyCLIP$_{\mathrm{V}}$ \citep{zhou2024anomalyclip} & 52.5 & 52.5 & 0.0 & 78.9 & 78.9 & 0.0 & 82.1 & 82.1 & 0.0 & 79.6 & 79.6 & 0.0 & 0.0 & 0.0 \\
CLIP-L/14 & AnomalyCLIP$_{\mathrm{M}}$ \citep{zhou2024anomalyclip} & 58.3 & 58.3 & 0.0 & 75.2 & 75.2 & 0.0 & 88.9 & 88.9 & 0.0 & 78.6 & 78.6 & 0.0 & 0.0 & 0.0 \\
\midrule
\multicolumn{16}{@{}l}{\textit{AU-PRO$_{0.3}$}} \\
WRN-50 & PatchCore \citep{roth2022patchcore} & 42.8 & \textbf{45.7} & +2.9 & 85.1 & \textbf{86.3} & +1.2 & \textbf{87.8} & 86.5 & $-$1.3 & 73.2 & \textbf{77.9} & +4.7 & +2.4 & +0.2 \\
WRN-50 & PaDiM \citep{defard2021padim} & 41.0 & \textbf{47.5} & +6.5 & 82.0 & \textbf{82.6} & +0.6 & 82.8 & \textbf{82.9} & +0.1 & 75.5 & \textbf{76.3} & +0.9 & +4.5 & +0.3 \\
WRN-50 & SPADE \citep{cohen2020spade} & 42.0 & \textbf{46.6} & +4.6 & 89.0 & \textbf{89.4} & +0.5 & \textbf{87.9} & 86.0 & $-$1.9 & \textbf{79.9} & 79.3 & $-$0.6 & +3.2 & $-$1.6 \\
DINOv2-S/14 & AnomalyDINO \citep{damm2025anomalydino} & 59.9 & \textbf{63.7} & +3.8 & 83.9 & \textbf{86.7} & +2.8 & \textbf{92.4} & 92.0 & $-$0.4 & 89.4 & \textbf{90.5} & +1.1 & +3.5 & 0.0 \\
DINOv2-G & SubspaceAD \citep{lendering2026subspacead} & 61.3 & \textbf{69.3} & +8.0 & 89.3 & \textbf{92.7} & +3.4 & \textbf{96.1} & 95.9 & $-$0.2 & 94.0 & \textbf{94.6} & +0.6 & +6.5 & 0.0 \\
CLIP-B/16 & WinCLIP \citep{jeong2023winclip} & 45.0 & \textbf{47.1} & +2.1 & 82.6 & 82.6 & 0.0 & 73.6 & \textbf{74.4} & +0.8 & 55.6 & \textbf{56.8} & +1.1 & +1.4 & +0.9 \\
CLIP-L/14 & MuSc \citep{li2024musc} & 53.0 & \textbf{56.7} & +3.7 & 76.9 & \textbf{77.6} & +0.7 & \textbf{91.1} & 88.5 & $-$2.7 & \textbf{90.6} & 87.4 & $-$3.3 & +2.7 & $-$2.8 \\
CLIP-L/14 & AnomalyCLIP$_{\mathrm{V}}$ \citep{zhou2024anomalyclip} & \textbf{45.1} & 45.0 & $-$0.1 & \textbf{88.1} & 86.3 & $-$1.8 & \textbf{83.4} & 83.2 & $-$0.2 & 76.6 & 76.6 & 0.0 & $-$0.7 & $-$0.1 \\
CLIP-L/14 & AnomalyCLIP$_{\mathrm{M}}$ \citep{zhou2024anomalyclip} & 42.6 & \textbf{43.8} & +1.2 & \textbf{89.9} & 89.2 & $-$0.7 & 87.0 & \textbf{87.1} & +0.1 & 67.0 & \textbf{67.6} & +0.6 & +0.6 & +0.2 \\
\bottomrule
\end{tabular}
}%
\end{table*}

\subsection{Plugging SPARC into Existing Detectors}
\label{sec:method-plugin}
SPARC is designed to compose with several classes of existing anomaly detectors that operate on per-patch features, without retraining.
The insertion point depends on how each detector computes its score from $z$.

\subsubsection{(A) Reference-based Detectors}
Reference-based detectors score test patches against normal references constructed from training features~\citep{roth2022patchcore,cohen2020spade,damm2025anomalydino,lendering2026subspacead,defard2021padim,jeong2023winclip}.
To avoid comparing corrected test features with an uncorrected reference, SPARC transforms both the cached training and test features.
The reference is rebuilt once per deployment lot from the transformed cached features using the detector's original procedure.
This requires access to the cached training features but no gradients or weight updates.

\subsubsection{(B) Reference-free (Mutual) Detectors}
MuSc~\citep{li2024musc} does not score against a training-derived reference.
It scores each test patch by mutual comparison against the pool of other test images. 
We therefore correct only the test side and pass each test patch through \eqref{eq:correct} before scoring, so that the mutual comparison operates on corrected features throughout.
With no reference side, symmetric application does not apply.

\subsubsection{(C) CLS-Decoupled Image-Level Detectors}
Some methods such as AnomalyCLIP~\citep{zhou2024anomalyclip} derive the image score directly from the encoder's CLS token and reserve patch features only for the pixel score. 
We therefore apply SPARC only to the patch features, leaving Image AUROC unchanged.
Since the pixel score still uses those features, its AU-PRO can change.

\subsection{Properties}
\label{sec:method-properties}
Key properties follow from SPARC's construction rather than from tuning.
Detailed analysis is provided in \Cref{sec:supp-theory} of the Appendix.

\subsubsection{Algebraic Saturation at \texorpdfstring{$r{=}k{-}1$}{r = k - 1}}
\Cref{eq:rank} sets $r{=}k{-}1$ as a universal saturation rank.
Because $q\leq k{-}1$, any larger requested rank yields the same projector.

Let $s=\dim(\widehat{\mathcal{S}}_{i,j}^{(r)})\leq k{-}1$.
\Cref{eq:correct} therefore modifies only an at-most-$(k{-}1)$-dimensional component and leaves every orthogonal direction unchanged.
Under isotropic centered deviations, the expected removed-energy fraction is $s/d$.
This average-case identity does not guarantee anomaly preservation because real feature directions need not be isotropic.
\Cref{sec:supp-theory,sec:supp-mechanism} of the Appendix provides the derivation and measures anomaly--subspace overlap and anomaly--normal contrast.

\subsubsection{Cost}
Fitting the SPARC projector costs $O(H'W'k^2d)$ and applying it costs $O(H'W'kd)$ per image.
Neither operation uses gradients.
Reference-based detectors additionally incur a one-time, detector-specific cost to transform cached training features and rebuild their references.

\section{Experiments}
\label{sec:experiments}

\subsection{Experimental Setup}
\label{sec:experiments-setup}

\subsubsection{Datasets}
We evaluate on four industrial inspection benchmarks, including two \emph{shift-prone} benchmarks whose test splits differ from their training splits through engineered conditions such as unseen lighting or viewpoint.
These are MVTec AD~2~\citep{hecklerkram2026mvtecad2} with 8 categories and AeBAD-S~\citep{zhang2023industrial} with 4 subdomains, each treated as a category.
We refer to VisA~\citep{zou2022visa} with 12 categories and RAD~\citep{cheng2024rad} with 4 categories as \emph{shift-free} because their splits contain no engineered shift.

\subsubsection{Detectors}
We evaluate SPARC with nine off-the-shelf detector configurations spanning the three integration families.
\begin{enumerate}[label=(\alph*)]
    \item \textbf{Reference-based detectors} include SubspaceAD~\citep{lendering2026subspacead}, AnomalyDINO~\citep{damm2025anomalydino}, PaDiM~\citep{defard2021padim}, PatchCore~\citep{roth2022patchcore}, SPADE~\citep{cohen2020spade}, and WinCLIP~\citep{jeong2023winclip}.
    \item \textbf{Reference-free mutual detectors} are represented by MuSc~\citep{li2024musc}.
    \item \textbf{CLS-decoupled image-level detectors} include AnomalyCLIP~\citep{zhou2024anomalyclip} with prompts pretrained on VisA (AnomalyCLIP$_\mathrm{V}$) and on MVTec AD (AnomalyCLIP$_\mathrm{M}$).
\end{enumerate}
Each detector uses the authors' released implementation, pretrained assets where applicable, and the default scoring procedure, unless otherwise mentioned. 

\subsubsection{Calibration Protocol}
For each dataset--category pair and each calibration size $k \in \{2, 4, 8\}$, we uniformly sample $k$ verified-normal images from the test-split normal pool and use the remainder as the held-out evaluation set.
Both base (without SPARC) and +S (with SPARC) are scored on the same evaluation set, so every reported $\Delta$ is a paired comparison. 
Each experiment uses five random seeds, with the mean score reported.
We sample the calibration images from the test-split normal pool to model verified-normal images from the incoming deployment lot, which is the distribution SPARC is intended to calibrate.
A calibration-source control using training-pool normals is reported in \Cref{sec:supp-mechanism} of the Appendix.

\subsubsection{Metrics}
Following standard practice, we report Image AUROC and AU-PRO$_{\tau}$, the area under the per-region overlap curve up to a false-positive rate limit $\tau$.
AU-PRO is computed with the standard Anomalib implementation at native ground-truth resolution, without any method-specific tuning~\citep{akcay2022anomalib}.
We use $\tau{=}0.3$ on all four benchmarks and additionally report $\tau{=}0.05$ on MVTec AD~2.
The main result tables report scores on a $0$--$100$ scale, with $\Delta$ denoting the SPARC-minus-base difference in pp.
For the two primary metrics, Image AUROC and AU-PRO$_{0.3}$, significance on the shift-prone benchmarks is assessed with a one-sided Wilcoxon signed-rank test (SPARC $>$ base) using seed-averaged per-category $\Delta$s as paired samples.
Holm correction is applied jointly across the seven detectors and two metrics, with full results in \Cref{sec:supp-statistics} of the Appendix.

\subsubsection{Implementation}
SPARC is implemented in PyTorch.
Experimental runs were distributed across three compute nodes with 16 GPUs in total.
Each run was executed independently on a single GPU.
Full hardware and software details are provided in \Cref{sec:supp-impl} of the Appendix.
Code and configurations will be released upon acceptance.

\subsection{Quantitative and Qualitative Results}
\label{sec:experiments-main}
\begin{table}[!t]
\caption{\textbf{MVTec AD 2 localization results at low false-positive rates.} AU-PRO$_{0.05}$ at $k{=}8$. $\Delta$ is the paired SPARC-minus-base difference computed before rounding and reported in percentage points (pp).}
\label{tab:main-mvtec-aupro005}
\centering
\setlength{\tabcolsep}{4pt}
\begin{tabular}{lrrr}
\toprule
\textbf{Method} & base & +S & $\Delta$ \\
\midrule
PatchCore & 19.0 & \textbf{20.4} & +1.3 \\
PaDiM & 11.3 & \textbf{17.1} & +5.8 \\
SPADE & 18.4 & \textbf{19.5} & +1.1 \\
AnomalyDINO & 37.0 & \textbf{38.8} & +1.8 \\
SubspaceAD & 35.4 & \textbf{45.6} & +10.2 \\
WinCLIP & 17.1 & \textbf{17.7} & +0.7 \\
MuSc & 26.6 & \textbf{29.9} & +3.2 \\
AnomalyCLIP$_{\mathrm{V}}$ & 25.2 & \textbf{25.5} & +0.3 \\
AnomalyCLIP$_{\mathrm{M}}$ & 25.2 & \textbf{25.5} & +0.4 \\
\bottomrule
\end{tabular}
\end{table}

\Cref{tab:main} reports Image AUROC and AU-PRO$_{0.3}$ results for every detector, with and without SPARC, under a strictly paired protocol.
\Cref{tab:main-mvtec-aupro005} additionally reports AU-PRO$_{0.05}$ on MVTec AD~2.
The host detector, its source training, the encoder, and the held-out evaluation set are all held fixed, and only the calibration correction is toggled under the setup described above.

Across the seven detectors whose image scores depend on corrected patch features, SPARC improves the shift-prone pooled results by an average of $+13.8$ percentage point (pp) in Image AUROC and $+3.5$~pp in AU-PRO$_{0.3}$.
After Holm correction, the Image AUROC gains are significant for all seven detectors, while five show significant AU-PRO$_{0.3}$ gains.
All seven also improve in the detector-level mean on each reported MVTec AD~2 metric (\Cref{tab:main,tab:main-mvtec-aupro005}).
On shift-free VisA and RAD, the changes are smaller and mixed across detectors.

Because the baseline does not use the calibration images, the base-versus-SPARC gap also reflects the additional deployment information available to SPARC.
The matched-budget comparisons in \Cref{sec:comparison} isolate the effect of the correction design by giving each method the same $k$ normal images.

Representative qualitative results are shown in \Cref{fig:qual-main}, with additional examples in \Cref{sec:supp-qualitative} of the Appendix.

\FloatBarrier

\begin{figure*}[!t]
\centering
\includegraphics[width=\textwidth]{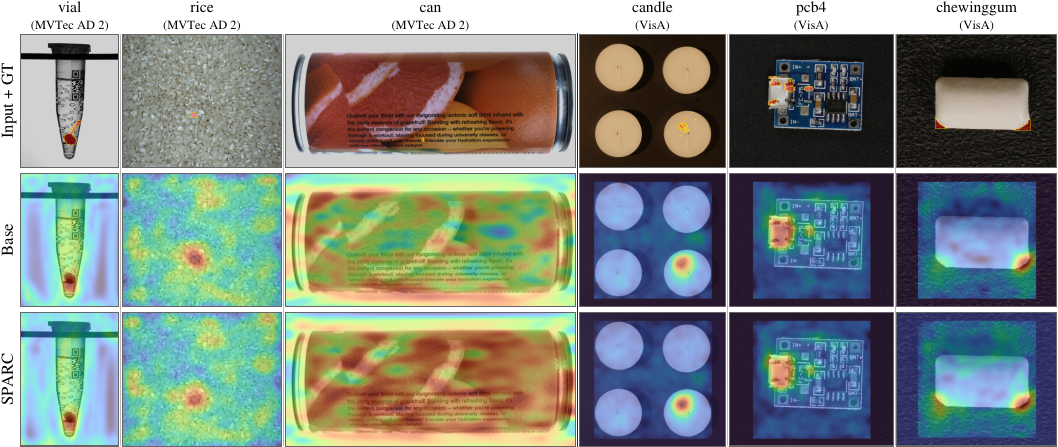}
\caption{\textbf{Qualitative results with PatchCore.} Base and SPARC share a common color scale within each column. Results use deployment-lot calibration with $k{=}8$ and seed$=0$. Score maps are restored to native image coordinates. From left to right, the displayed categories are vial, rice, and can from MVTec AD 2, followed by candle, PCB4, and chewing gum from VisA.}
\label{fig:qual-main}
\end{figure*}

\subsection{Comparison}
\label{sec:comparison}

We compare SPARC with query-conditioned few-shot baselines and matched-budget CORAL and batch-normalization (BN) controls that use the same seed-specific $k{=}8$ calibration draws and evaluation subsets.
We also include transductive full-normal diagnostics estimated from all ground-truth normal deployment images, including normals in the evaluation subset.
These nondeployable information-budget controls test whether few-shot statistic estimation limits the matched-budget variants.
\Cref{tab:normalization-comparison} reports the feature-level comparison on the shared WRN-50 detectors PaDiM, PatchCore, and SPADE.

\begin{table}[!t]
\caption{\textbf{Feature-level adaptation controls.} All entries are paired changes in percentage points (pp). Values are macro-averaged over categories and methods within each seed and reported as mean$\pm$sample SD across five seeds. $^\dagger$Full-normal diagnostics use all ground-truth normal deployment images, including normals in the evaluation subset, and are nondeployable transductive controls.}
\label{tab:normalization-comparison}
\centering
\setlength{\tabcolsep}{2.2pt}
\renewcommand{\arraystretch}{1.02}
\begin{tabular}{@{}lrrr@{}}
\toprule
Variant
& $\Delta$ Image AUROC
& $\Delta$ AU-PRO$_{0.3}$
& \shortstack{MVTec AD~2\\$\Delta$ AU-PRO$_{0.05}$} \\
\midrule

\multicolumn{4}{@{}l}{\textit{Shift-prone benchmarks (MVTec AD~2 and AeBAD-S)}} \\
CORAL few-shot & +1.8$\pm$0.4 & $-$2.8$\pm$0.1 & $-$2.5$\pm$0.1 \\
Calibration-only BN & $-$0.5$\pm$0.4 & +2.1$\pm$0.2 & +4.4$\pm$0.1 \\
Source-mixed BN & +1.5$\pm$0.2 & +4.0$\pm$0.1 & +6.1$\pm$0.2 \\
CORAL full-normal$^\dagger$ & +3.0$\pm$0.4 & $-$2.7$\pm$0.0 & $-$2.6$\pm$0.0 \\
BN full-normal$^\dagger$ & +0.4$\pm$0.1 & +2.3$\pm$0.0 & +4.7$\pm$0.0 \\
SPARC (ours) & +12.9$\pm$1.4 & +3.4$\pm$0.8 & +2.7$\pm$1.6 \\

\midrule
\multicolumn{4}{@{}l}{\textit{Shift-free benchmarks (VisA and RAD)}} \\
CORAL few-shot & +2.5$\pm$0.5 & $-$3.8$\pm$0.1 & -- \\
Calibration-only BN & +1.4$\pm$0.6 & $-$1.3$\pm$0.5 & -- \\
Source-mixed BN & +0.2$\pm$0.1 & +1.7$\pm$0.0 & -- \\
CORAL full-normal$^\dagger$ & +3.3$\pm$0.2 & $-$3.8$\pm$0.0 & -- \\
BN full-normal$^\dagger$ & +2.0$\pm$0.1 & $-$1.1$\pm$0.0 & -- \\
SPARC (ours) & +1.0$\pm$0.2 & $-$0.4$\pm$0.1 & -- \\
\bottomrule
\end{tabular}

\end{table}

Detailed results are provided in \Cref{sec:supp-baselines} of the Appendix.

\subsubsection{Query-Conditioned Few-Shot Detectors}
\label{sec:comparison-few-shot}
Compared with FastRecon and FastRef, the strongest SPARC-equipped detectors match or exceed both baselines across benchmarks, with the clearest margin on the shift-prone AeBAD-S, while weaker hosts fall behind.
Because the reference-based SPARC hosts also retain their training-derived normal references, these absolute scores do not constitute an information-matched comparison.

\subsubsection{Global Feature Alignment}
\label{sec:comparison-coral}
CORAL aligns global second-order statistics, and its matched-budget and full-normal conditions produce nearly identical per-detector changes while often degrading localization, with the largest drops on PaDiM and PatchCore.
This stability across calibration budgets indicates that few-shot estimation alone does not explain CORAL's behavior.
For most detectors, its detection gains are small, and SPARC's shift-prone Image AUROC gain is substantially larger, though CORAL raises SPADE sharply from a low base.

\subsubsection{Batch-Normalization Statistic Adaptation}
\label{sec:comparison-bn}
We evaluate calibration-only and source-mixed BN adaptation using the same $k{=}8$ images.
The full-normal diagnostic does not remove the detector-dependent tradeoff.
SPARC provides a substantially larger shift-prone Image AUROC gain than the matched-budget BN controls, while source-mixed BN performs better on localization.

\subsection{Ablation Studies}
\label{sec:analysis}
Full results for all three ablations are provided in \Cref{sec:supp-design} of the Appendix.

\subsubsection{Correction Mode}
\label{sec:design-correction-mode}
We compare per-cell SPARC with mean-only and global-subspace corrections using the same host detectors and the same $k{=}8$ calibration draws.
The proposed per-cell SPARC achieves the best pooled result on all three metrics, while the global correction remains near the baseline and the mean-only correction can be harmful.
These information-matched controls support the importance of the per-cell subspace design.
Further analysis of the correction mechanism is provided in \Cref{sec:supp-mechanism} of the Appendix.

\subsubsection{Projection Rank}
\label{sec:design-rank}
\begin{figure*}[!t]
\centering
\includegraphics[width=\textwidth]{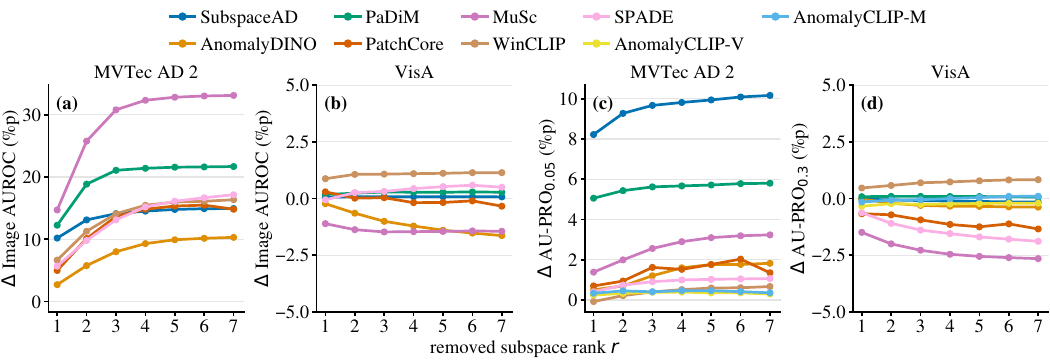}
\caption{\textbf{Projection rank ablation.} Panels (a) and (b) show $\Delta$ Image AUROC on shift-prone MVTec~AD~2 and shift-free VisA. Panel (c) shows $\Delta$ AU-PRO$_{0.05}$ on MVTec~AD~2, and panel (d) shows $\Delta$ AU-PRO$_{0.3}$ on VisA. AnomalyCLIP appears only in the AU-PRO panels.}
\label{fig:ab2-rank}
\end{figure*}

Mean centering limits the empirical disagreement subspace to at most $k{-}1$ dimensions, making $r{=}k{-}1$ the saturation rank.
As shown in \Cref{fig:ab2-rank}, the shift-prone gains generally increase toward this value, although the exact empirical optimum varies by detector and metric.
The changes on shift-free VisA remain comparatively small and mixed.
These results support $r{=}k{-}1$ as a default fixed by the calibration size rather than tuned per benchmark.

\subsubsection{Symmetric Application}
\label{sec:design-symmetric}
For reference-based detectors, one-sided correction compares projected features with an unprojected reference.
On PatchCore, the proposed symmetric correction is more reliable than either one-sided variant and performs best on both shift-prone benchmarks. 

\subsection{Robustness and Sensitivity}
\label{sec:robustness}

We examine sensitivity to calibration budget, imperfect normal verification, and encoder choice, with detailed results in \Cref{sec:supp-robustness} of the Appendix.

\subsubsection{Calibration Size}
Across $k\in\{2,4,8\}$, gains generally increase on the shift-prone benchmarks, while changes on the shift-free benchmarks stay small and lack a consistent trend with $k$.

\subsubsection{Calibration Contamination}
Replacing up to four of the eight calibration normals with same-category anomalies leaves the pooled Image AUROC gain positive through three contaminated samples, while localization degrades earlier.

\subsubsection{Backbone}
Across eight PatchCore backbones, Image AUROC and AU-PRO$_{0.3}$ improve on MVTec AD~2 for every backbone and on AeBAD-S on average, although the AeBAD-S gains vary in sign across backbones; changes on the shift-free benchmarks remain small on average.

\subsection{Cross-Scene Transfer}
\label{sec:cross-scene-transfer}
The main protocol evaluates direct calibration to the incoming lot, while a complementary scene-disjoint split removes all captures associated with calibration-selected scenes to test transfer to unseen physical scenes within the same shifted domain.
Detection gains are smaller but remain positive for most affected detectors, while localization results are mixed, with details in \Cref{sec:supp-scene-disjoint} of the Appendix.

\section{Conclusion}
\label{sec:conclusion}
We presented SPARC, a lightweight correction that estimates and removes a per-cell disagreement subspace from $k{\le}8$ verified-normal deployment images.
Its saturation rank $r{=}k{-}1$, native patch grid, and symmetric application require no held-out validation.
On the shift-prone MVTec AD~2 and AeBAD-S, SPARC produces positive pooled detection and localization changes for all seven detectors with patch-feature image scores.
Changes on the shift-free VisA and RAD are comparatively small and mixed.
SPARC requires spatially aligned patch features and can be hurt when within-lot disagreement reflects intrinsic normal variation or defect signal rather than removable deployment variation.

\bibliography{references}
\bibliographystyle{iclr2027_conference}

\renewcommand{\theHequation}{appendix.\arabic{equation}}
\renewcommand{\theHtable}{appendix.\arabic{table}}
\renewcommand{\theHfigure}{appendix.\arabic{figure}}
\appendix 
\section{Theoretical Analysis}
\label{sec:supp-theory}

This section provides formal statements for the algebraic saturation of the projection rank, the expected removed energy under isotropy, and the computational cost of SPARC.

\subsection{Algebraic Saturation at \texorpdfstring{$r = k - 1$}{r = k - 1}}

\begin{proposition}[Saturation of the projection rank]
\label{prop:rank}
Let $X_{i,j}\in\mathbb{R}^{k\times d}$ be the calibration sample matrix at cell $(i,j)$ and let
\begin{equation}
\widetilde X_{i,j}
=
X_{i,j}-\mathbf{1}_k\bar\mu_{i,j}^{\top}.
\end{equation}
Set
$q=\operatorname{rank}(\widetilde X_{i,j})\leq\min(k{-}1,d)$
and fix an ordered set of right singular vectors
$w_{i,j,1},\ldots,w_{i,j,q}$ associated with the nonzero singular values.
For a requested rank $r$, define
\begin{equation}
\widehat{\mathcal{S}}_{i,j}^{(r)}
=
\operatorname{span}
\bigl(
w_{i,j,1},\ldots,w_{i,j,\min(r,q)}
\bigr).
\end{equation}
Then the subspaces $\widehat{\mathcal{S}}_{i,j}^{(r)}$ are nested in $r$ and satisfy
\begin{equation}
\widehat{\mathcal{S}}_{i,j}^{(r)}
=
\operatorname{rowspace}(\widetilde X_{i,j})
\qquad
\text{for every } r\geq q.
\end{equation}
Since $q\leq k{-}1$, the requested rank $r=k{-}1$ always recovers the full disagreement subspace.
If $d\geq k{-}1$ and the calibration features are in general affine position, then $q=k{-}1$.
\end{proposition}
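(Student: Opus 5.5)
The plan is to prove the four claims in order, using only the reduced SVD of \eqref{eq:svd} and elementary rank facts.

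\textbf{Rank bound.} Write $\widetilde X_{i,j}=(I_k-\tfrac1k\mathbf 1_k\mathbf 1_k^\top)X_{i,j}$. The matrix $I_k-\tfrac1k\mathbf 1_k\mathbf 1_k^\top$ is the orthogonal projector onto $\mathbf 1_k^\perp$, so it has rank $k-1$. Submultiplicativity of rank, together with the trivial bound $\operatorname{rank}\le d$ on $d$ columns, gives $q\le\min(k-1,d)$. Equivalently, $\mathbf 1_k^\top\widetilde X_{i,j}=0$, so the $k$ rows are linearly dependent.

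\textbf{Nesting.} Nestedness is immediate from the definition. If $r\le r'$, then $\min(r,q)\le\min(r',q)$. The spanning set for $\widehat{\mathcal S}^{(r)}_{i,j}$ is therefore a prefix of the fixed ordered list used for $\widehat{\mathcal S}^{(r')}_{i,j}$, so $\widehat{\mathcal S}^{(r)}_{i,j}\subseteq\widehat{\mathcal S}^{(r')}_{i,j}$.

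\textbf{Saturation.} For $r\ge q$ we have $\min(r,q)=q$, so $\widehat{\mathcal S}^{(r)}_{i,j}=\operatorname{span}(W_{i,j})=\operatorname{colspace}(W_{i,j})$. From the reduced SVD, $\widetilde X_{i,j}^\top=W_{i,j}\Sigma_{i,j}U_{i,j}^\top$, which gives $\operatorname{rowspace}(\widetilde X_{i,j})\subseteq\operatorname{colspace}(W_{i,j})$. The two spaces have equal dimension $q$, because $W_{i,j}$ has $q$ orthonormal columns and the row rank equals $q$. Hence they are equal. One could instead argue directly that $\Sigma_{i,j}U_{i,j}^\top$ has full row rank $q$, so the inclusion is onto. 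Since $q\le k-1$, the choice $r=k-1\ge q$ falls in this regime, which gives the "always recovers" sentence.

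\textbf{General affine position.} This is the only step needing care, and it mostly comes down to fixing a definition. Take general affine position (with $d\ge k-1$) to mean that $f^{(1)}_{i,j},\dots,f^{(k)}_{i,j}$ are affinely independent, i.e.\ $f^{(\ell)}_{i,j}-f^{(1)}_{i,j}$, $\ell=2,\dots,k$, are linearly independent. The rows of $\widetilde X_{i,j}$ span the same space as these differences. Each difference $f^{(\ell)}-f^{(1)}$ equals $(f^{(\ell)}-\bar\mu)-(f^{(1)}-\bar\mu)$. Conversely, $f^{(\ell)}-\bar\mu=\tfrac1k\sum_m(f^{(\ell)}-f^{(m)})$, and each $f^{(\ell)}-f^{(m)}=(f^{(\ell)}-f^{(1)})-(f^{(m)}-f^{(1)})$ lies in the span of the differences. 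So the row space of $\widetilde X_{i,j}$ has dimension $k-1$, i.e.\ $q=k-1$.

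The main obstacle is ensuring the "general position" hypothesis is stated so that it is exactly affine independence. If the intended meaning is generic (Lebesgue-almost-every) configurations, I would add a remark. The set where some $(k-1)\times(k-1)$ minor of the difference matrix vanishes is a proper algebraic subset when $d\ge k-1$, hence measure zero, so the claim holds generically.
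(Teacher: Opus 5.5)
Your proposal is correct and follows the same route as the paper's proof sketch. The rank bound comes from the rank-$(k-1)$ centering projector $I_k-\tfrac1k\mathbf 1_k\mathbf 1_k^\top$, nesting from prefixes of the fixed ordered basis, saturation from the $q$ right singular vectors spanning the row space, and $q=k-1$ from centering being the only linear dependence under affine independence. You also fill in two details the paper leaves implicit: the dimension count behind the row-space equality, and the identification of the centered row space with the span of the differences $f^{(\ell)}_{i,j}-f^{(1)}_{i,j}$.
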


\begin{proof}[Proof sketch]
Let
\begin{equation}
H_k
=
I_k-\frac{1}{k}\mathbf{1}_k\mathbf{1}_k^{\top}.
\end{equation}
Then $\widetilde X_{i,j}=H_kX_{i,j}$ and
$\operatorname{rank}(H_k)=k{-}1$.
Therefore,
\begin{equation}
q
=
\operatorname{rank}(\widetilde X_{i,j})
\leq
\min(k{-}1,d).
\end{equation}
For the fixed ordered SVD basis, increasing $r$ adds right singular vectors until all $q$ nonzero directions are retained.
Their span is exactly the row space of $\widetilde X_{i,j}$, so the subspace and its projector are unchanged for every $r\geq q$.
If the calibration features are in general affine position and $d\geq k{-}1$, centering introduces the only linear dependence and gives $q=k{-}1$.
\end{proof}

\subsection{Removed Energy Under Isotropy}

\begin{proposition}[Isotropic removed-energy identity]
\label{prop:capacity}
Let $V_{i,j}\in\mathbb{R}^{d\times s}$ have orthonormal columns, where $s\leq k{-}1$, and let
\begin{equation}
\label{eq:supp-removed}
\delta_{i,j}(f)
:=
V_{i,j}V_{i,j}^{\top}(f-\bar\mu_{i,j})
\end{equation} 
be the component removed from a feature $f$.
Let $g=f-\bar\mu_{i,j}\neq 0$ and assume that
$u=g/\|g\|_2$ is uniformly distributed on the unit sphere in $\mathbb{R}^d$.
Then
\begin{equation}
\label{eq:supp-capacity}
\mathbb{E}
\left[
\frac{\|\delta_{i,j}(f)\|_2^2}{\|g\|_2^2}
\right]
=
\frac{s}{d}
\leq
\frac{k-1}{d},
\end{equation}
and deterministically
\begin{equation}
0
\leq
\frac{\|\delta_{i,j}(f)\|_2^2}{\|g\|_2^2}
\leq
1.
\end{equation}
The upper bound is attained if and only if $g\in\operatorname{span}(V_{i,j})$.
\end{proposition}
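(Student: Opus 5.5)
The plan is to reduce everything to the orthogonal projector $P=V_{i,j}V_{i,j}^{\top}$ and the unit vector $u=g/\|g\|_2$. Since $P$ is linear, $\delta_{i,j}(f)=Pg=\|g\|_2\,Pu$. Hence
\begin{equation*}
\frac{\|\delta_{i,j}(f)\|_2^2}{\|g\|_2^2}=\|Pu\|_2^2=u^{\top}Pu=\sum_{m=1}^{s}(v_m^{\top}u)^2,
\end{equation*}
where $v_1,\dots,v_s$ are the orthonormal columns of $V_{i,j}$. Here I use $P^{\top}=P$ and $P^2=P$, which follow from $V_{i,j}^{\top}V_{i,j}=I_s$. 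This identity turns both claims into statements about a single unit vector.

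For the expectation, I will use rotational invariance of the uniform distribution on the sphere. For any fixed unit vector $v$, the quantity $\mathbb{E}[(v^{\top}u)^2]$ does not depend on $v$. Extend $v_1,\dots,v_s$ to an orthonormal basis $v_1,\dots,v_d$ of $\mathbb{R}^d$. Then $\sum_{m=1}^{d}(v_m^{\top}u)^2=\|u\|_2^2=1$ almost surely, so each term has expectation $1/d$. Equivalently, $\mathbb{E}[uu^{\top}]=I_d/d$, which gives $\mathbb{E}[u^{\top}Pu]=\operatorname{tr}(P)/d$.

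Summing the $s$ terms gives $s/d$. The bound $s/d\le (k-1)/d$ then follows from the hypothesis $s\le k-1$, which is guaranteed by Proposition~\ref{prop:rank}. One point needs care: $V_{i,j}$ must be treated as fixed, or independent of $u$, when taking the expectation. I will state this explicitly, since $V_{i,j}$ is determined by the calibration set while $f$ is a test feature.

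For the deterministic bounds, nonnegativity is immediate from the ratio being a squared norm. For the upper bound I will use the Pythagorean decomposition $u=Pu+(I-P)u$, whose two pieces are orthogonal. This gives $1=\|Pu\|_2^2+\|(I-P)u\|_2^2$, so $\|Pu\|_2^2\le 1$. Equality holds exactly when $(I-P)u=0$, that is, when $u\in\operatorname{range}(P)=\operatorname{span}(V_{i,j})$. Since $g\neq 0$ and $u$ is a positive multiple of $g$, this is equivalent to $g\in\operatorname{span}(V_{i,j})$.

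There is no real obstacle here. The only subtle step is justifying $\mathbb{E}[uu^{\top}]=I_d/d$ and keeping $V_{i,j}$ fixed relative to $u$; both are handled by the rotational-invariance argument above.
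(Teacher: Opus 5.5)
Your proposal is correct and takes essentially the paper's route: you reduce the ratio to $\|Pu\|_2^2$ with $P=V_{i,j}V_{i,j}^{\top}$, obtain $s/d$ from $\mathbb{E}[uu^{\top}]=I_d/d$ and $\operatorname{tr}(P)=s$, and get the deterministic bounds and equality case from orthogonal-projection contractivity, which your Pythagorean decomposition makes explicit. Two details you supply are left implicit in the paper's sketch: the rotational-invariance justification of $\mathbb{E}[uu^{\top}]=I_d/d$, and the requirement that $V_{i,j}$ be fixed or independent of $u$.
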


\begin{proof}[Proof sketch]
Let $P_{i,j}=V_{i,j}V_{i,j}^{\top}$.
Since $P_{i,j}$ is an orthogonal projector,
$\operatorname{tr}(P_{i,j})=s$ and
$\mathbb{E}[uu^{\top}]=I_d/d$.
Therefore,
\begin{equation}
\mathbb{E}\!\left[\|P_{i,j}u\|_2^2\right]
=
\operatorname{tr}
\left(
P_{i,j}\mathbb{E}[uu^{\top}]
\right)
=
\frac{s}{d}.
\end{equation}
The deterministic bounds and equality condition follow from the contractivity of orthogonal projection.
\end{proof}

\Cref{prop:capacity} quantifies the expected fraction of feature energy removed only under an isotropic directional model.
It does not imply neutral detector performance or guarantee preservation of anomaly features because real feature directions may be anisotropic and detector scores are not determined by feature energy alone.
The anomaly-subspace and anomaly-normal contrast diagnostics in \Cref{tab:anomaly-direction-diagnostics} examine these effects empirically.

\subsection{Computational Cost}

For an encoder producing $H'\!\times\!W'$ patch features of dimension
$d$, fitting SPARC at calibration size $k$ requires:
\begin{itemize}
\setlength\itemsep{1pt}
  \item Centering the calibration matrix at each cell:
        $O(H'W' \cdot kd)$.
  \item Compact SVDs of the $k \times d$ centered matrices, one per cell:
        $O(H'W' \cdot k^2 d)$.
\end{itemize}
The compact-SVD cost dominates.
For $H'{=}W'{=}14$, $k{=}8$, and $d{=}768$, the leading-order term is on the order of $10^7$ arithmetic operations.
At inference, applying SPARC requires $O(ds)\leq O(dk)$ operations per cell because the projector is applied through a basis $V_{i,j}\in\mathbb{R}^{d\times s}$ and is never materialized as a $d\times d$ matrix.
The total inference cost is therefore $O(H'W'kd)$ per image.
These bounds cover fitting and applying the SPARC projector.
Reference-based detectors incur an additional one-time cost to transform cached training features and rebuild their references.
This detector-specific cost is not included in the bounds above.

\section{Implementation Details}
\label{sec:supp-impl}

\subsection{Calibration Matrix Construction}
Given $k$ verified-normal calibration images, we run each through the frozen encoder $\Phi$ once and cache the resulting patch-feature maps. 
The calibration matrix $X_{i,j}$ at cell $(i,j)$ is then formed by stacking the $k$ patch features at that cell. 
We do not unroll the spatial grid or otherwise mix features across cells: each of the $H'W'$ matrices is fitted independently. 
The retained basis $V_{i,j}\in\mathbb{R}^{d\times s}$ is stored along with the mean $\bar\mu_{i,j}$ as the entire deployment-time state of SPARC.

\subsection{Symmetric Application for Reference-based Detectors}
We apply the correction to the cached training patch features at calibration time before rebuilding the detector-specific reference (e.g., coreset and KNN index for memory-bank methods or per-position Gaussian statistics for PaDiM).
The same correction is applied to every test patch before scoring. Thus, the train-derived reference and test features lie in the same projected space, while the detector's fitting and scoring algorithms remain unchanged.

\subsection{Random Seeds}
All reported numbers in the main body and in this appendix are averaged over five deterministic seeds for the calibration draw.
The seeds control only which $k$ verified-normal images are sampled from the test-split normal pool; everything downstream is deterministic given the seed.

\subsection{Reference Implementations}
All detectors are implemented based on their official GitHub repositories. 
As FastRef does not have an official reference implementation at the time of our experiments, we reimplement the Euclidean prototype-refinement variant described in Algorithm 1 and apply it to WRN-50 PatchCore-style prototypes, using the same calibration draws and held-out evaluation sets as SPARC~\citep{li2026fastref}. 

\subsection{Experimental Environment}
\label{sec:experimental-environment}

All experiments were executed on three GPU compute nodes using a common, version-locked software environment. Experimental units were distributed through a shared queue, with each unit assigned to a single GPU. The nodes shared the same experiment code, datasets, model assets, and result storage.

\subsubsection{Hardware Specifications}
\begin{table*}[ht!]
\caption{\textbf{Hardware specifications.} System memory/cores show host RAM/CPUs; VRAM is per device.}
\label{tab:hardware-specifications}
\centering
\setlength{\tabcolsep}{5pt}
\resizebox{\textwidth}{!}{%
\begin{tabular}{@{}llllll@{}}
\toprule
Node
& GPU configuration
& VRAM
& CPU configuration
& Cores/Threads
& System Memory \\
\midrule
A
& $2\times$ NVIDIA L40
& 48 GB
& $2\times$ Intel Xeon Silver 4410Y
& 24/48
& 0.5 TB\\
B
& $8\times$ NVIDIA RTX 6000 Ada Generation
& 48 GB
& $2\times$ AMD EPYC 9124
& 32/64
& 1.5 TB \\
C
& $6\times$ NVIDIA RTX 6000 Ada Generation
& 48 GB
& $2\times$ AMD EPYC 9124
& 32/64
& 1.5 TB \\
\bottomrule
\end{tabular}
}%
\end{table*}

\Cref{tab:hardware-specifications} summarizes the hardware used for the experiments. All GPUs are based on the NVIDIA Ada Lovelace architecture and have compute capability 8.9. All nodes ran Ubuntu 24.04.4 LTS with glibc 2.39. Each node provided 8 GiB of swap space in addition to the system memory listed above.

\subsubsection{Software and Frameworks}
\begin{table}[!ht]
\caption{\textbf{Software environment.} Versions were fixed across all experimental nodes.}
\label{tab:software-environment}
\centering
\setlength{\tabcolsep}{7pt}
\begin{tabular}{@{}ll@{}}
\toprule
Library or framework & Version \\
\midrule
Python & 3.12.3 \\
PyTorch & 2.7.0+cu128 \\
Torchvision & 0.22.0+cu128 \\
xFormers & 0.0.30 \\
CUDA runtime/toolkit & 12.8/12.8.93 \\
cuDNN & 9.7.1 \\
Anomalib & 2.4.2 \\
TorchMetrics & 1.8.2 \\
Kornia & 0.8.1 \\
OpenCV & 4.11.0.86 \\
NumPy & 1.26.4 \\
SciPy & 1.16.1 \\
scikit-learn & 1.7.1 \\
scikit-image & 0.25.2 \\
Pillow & 11.3.0 \\
imagecodecs & 2024.12.30 \\
pandas & 2.2.3 \\
Matplotlib & 3.10.5 \\
Transformers & 4.56.2 \\
timm & 1.0.19 \\
FAISS GPU (CUDA 12) & 1.11.0 \\
Joblib & 1.5.1 \\
\bottomrule
\end{tabular}
\end{table}

All experiments used Python 3.12.3 and PyTorch 2.7.0 compiled for CUDA 12.8. The installed CUDA toolkit was version 12.8.93, and PyTorch used cuDNN 9.7.1. The complete environment was installed from a version-locked dependency specification and was identical across the three compute nodes. The principal libraries and frameworks are listed in \Cref{tab:software-environment}.

GPU execution used TF32 for supported floating-point matrix multiplication and convolution operations. Mixed-precision execution used bfloat16 in model paths for which it was enabled. Model checkpoints and third-party repositories were prepared in local caches before execution, and workload-time network access was restricted to ensure that every worker used the same model assets.

\section{Comparisons}
\label{sec:supp-baselines}
\subsection{Query-Conditioned Few-Shot Baselines}
\label{sec:supp-comparison}
\begin{table*}[!ht]
\caption{\textbf{Query-conditioned detector comparison.} At $k{=}8$, the table reports per-dataset Image AUROC and localization results; +SPARC denotes our correction layer.}
\label{tab:comparison-baselines-auroc}
\label{tab:comparison-baselines-aupro}
\centering
\setlength{\tabcolsep}{1.6pt}
\renewcommand{\arraystretch}{1.02}
\resizebox{\textwidth}{!}{%
\begin{tabular}{@{}l *{9}{r}@{}}
\toprule
& \multicolumn{3}{c}{\textbf{MVTec AD 2}} & \multicolumn{2}{c}{\textbf{AeBAD-S}} & \multicolumn{2}{c}{\textbf{VisA}} & \multicolumn{2}{c}{\textbf{RAD}} \\
\cmidrule(lr){2-4}\cmidrule(lr){5-6}\cmidrule(lr){7-8}\cmidrule(lr){9-10}
\textbf{Method} & \makecell{Image\\AUROC} & \mbox{AU-PRO$_{0.05}$} & \mbox{AU-PRO$_{0.3}$} & \makecell{Image\\AUROC} & \mbox{AU-PRO$_{0.3}$} & \makecell{Image\\AUROC} & \mbox{AU-PRO$_{0.3}$} & \makecell{Image\\AUROC} & \mbox{AU-PRO$_{0.3}$} \\
\midrule
FastRef & 83.7 & 22.7 & 46.4 & 68.0 & 82.4 & 83.7 & 76.4 & 98.1 & 90.3 \\
FastRecon & 75.5 & 17.4 & 41.6 & 57.1 & 78.8 & 72.0 & 63.2 & 98.2 & 86.7 \\
\midrule
SubspaceAD + SPARC & 85.8 & 45.6 & 69.3 & 88.1 & 92.7 & 98.2 & 95.9 & 96.4 & 94.6 \\
AnomalyDINO + SPARC & 82.0 & 38.8 & 63.7 & 73.5 & 86.7 & 93.7 & 92.0 & 97.0 & 90.5 \\
PaDiM + SPARC & 81.6 & 17.1 & 47.5 & 62.5 & 82.6 & 89.6 & 82.9 & 97.8 & 76.3 \\
PatchCore + SPARC & 83.6 & 20.4 & 45.7 & 66.3 & 86.3 & 93.9 & 86.5 & 96.2 & 77.9 \\
MuSc + SPARC & 84.5 & 29.9 & 56.7 & 39.0 & 77.6 & 87.6 & 88.5 & 62.3 & 87.4 \\
WinCLIP + SPARC & 79.3 & 17.7 & 47.1 & 76.4 & 82.6 & 76.6 & 74.4 & 83.4 & 56.8 \\
AnomalyCLIP$_{\mathrm{V}}$ + SPARC & 52.5 & 25.5 & 45.0 & 78.9 & 86.3 & 82.1 & 83.2 & 79.6 & 76.6 \\
AnomalyCLIP$_{\mathrm{M}}$ + SPARC & 58.3 & 25.5 & 43.8 & 75.2 & 89.2 & 88.9 & 87.1 & 78.6 & 67.6 \\
SPADE + SPARC & 71.0 & 19.5 & 46.6 & 43.6 & 89.4 & 84.2 & 86.0 & 34.7 & 79.3 \\
\bottomrule
\end{tabular}
}%
\end{table*}

We situate SPARC against two dedicated query-conditioned few-shot methods, FastRecon~\citep{fang2023fastrecon} and FastRef~\citep{li2026fastref}, each run in its intended regime (\Cref{tab:comparison-baselines-auroc}; see the Experiments section of the main paper for the setup).

This is not an information-matched head-to-head.
SPARC is a correction layer rather than a detector, so its host, PatchCore, additionally consumes the full train set that these standalone methods never see.
The comparison can therefore support only the weaker statement that an off-the-shelf detector equipped with SPARC reaches the level of purpose-built few-shot detectors.

Acknowledging that, SPARC+PatchCore reaches comparable absolute scores to both across categories.
We include these numbers only as a cross-setting reference point.
The controlled evidence for SPARC's contribution instead lies in the within-detector toggle and correction-mode ablation, which hold the host, its training access, and the $k$ draws fixed and isolate the correction as this cross-setting comparison cannot.

\subsection{Global Feature Alignment (CORAL)}
\label{sec:supp-coral}
As a counterpart to SPARC's per-cell projection, we implemented CORAL~\citep{sun2016coral}.

On $\ell_2$-normalized features we map a deployment feature by $f\mapsto(f-\mu_s)\,C_s^{-1/2}C_t^{1/2}+\mu_t$, with source covariance $C_s$ from the deployment normals, target covariance $C_t$ from the detector's own train/good features, and $C_{\{s,t\}}=\mathrm{Cov}(\cdot)+\lambda I$ ($\lambda{=}1$). 
We run two information budgets that differ only in how many deployment normals estimate $C_s$.
\emph{CORAL-fs} uses the matched few-shot budget ($k{\le}8$).
\emph{CORAL full-normal} uses all ground-truth normal deployment images, including normals in the evaluation split, and serves as a transductive diagnostic rather than a deployable baseline.
This applies only to the six detectors with a native train-derived reference; MuSc and AnomalyCLIP are excluded as they provide no such target covariance under the same budget. 

\Cref{tab:coral-sectioned} reports both budgets under the same paired protocol.
CORAL's pooled effects are detector-dependent: Image AUROC changes remain near zero for AnomalyDINO, SubspaceAD, and WinCLIP, while PatchCore and PaDiM decline; pooled AU-PRO$_{0.3}$ also declines for PatchCore, PaDiM, and SubspaceAD (\Cref{tab:coral-budget-sweep}).
Except for SPADE Image AUROC, CORAL-fs and CORAL full-normal agree closely across nearly all cells.
Their similarity for most detectors suggests that few-shot covariance estimation alone does not explain CORAL's behavior.

This distinction motivates SPARC.
Rather than rescaling the whole feature space, SPARC removes an at-most-$(k{-}1)$-dimensional calibration-disagreement subspace and leaves its orthogonal complement untouched.

\begin{table*}[!ht]
\caption{\textbf{CORAL results by detector and dataset.} Matched-budget and full-normal diagnostic conditions use the same paired evaluation protocol.}
\label{tab:coral-sectioned}
\centering
\setlength{\tabcolsep}{1.4pt}
\renewcommand{\arraystretch}{0.92}
\resizebox{\textwidth}{!}{%
\begin{tabular}{@{}ll|rrr|rrr|rrr|rrr@{}}
\toprule
&& \multicolumn{3}{c|}{\textbf{MVTec AD~2}} & \multicolumn{3}{c|}{\textbf{AeBAD-S}} & \multicolumn{3}{c|}{\textbf{VisA}} & \multicolumn{3}{c}{\textbf{RAD}} \\
\cmidrule(lr){3-5}\cmidrule(lr){6-8}\cmidrule(lr){9-11}\cmidrule(lr){12-14}
\textbf{Method} & \textbf{Budget} & base & +C & $\Delta$ & base & +C & $\Delta$ & base & +C & $\Delta$ & base & +C & $\Delta$ \\
\midrule
\multicolumn{14}{@{}l}{\textit{Image AUROC}} \\
SubspaceAD & CORAL-fs & 70.9 & \textbf{71.0} & +0.2 & 73.6 & \textbf{77.2} & +3.6 & \textbf{98.1} & 97.6 & -0.5 & \textbf{96.3} & 95.6 & -0.7 \\
SubspaceAD & CORAL full-normal & 70.9 & \textbf{71.1} & +0.2 & 73.6 & \textbf{77.3} & +3.7 & \textbf{98.1} & 97.6 & -0.5 & \textbf{96.3} & 95.6 & -0.7 \\
AnomalyDINO & CORAL-fs & 71.7 & \textbf{72.5} & +0.8 & 71.7 & \textbf{73.4} & +1.7 & 95.3 & 95.3 & +0.0 & 97.2 & \textbf{97.4} & +0.2 \\
AnomalyDINO & CORAL full-normal & 71.7 & \textbf{72.7} & +1.0 & 71.7 & \textbf{73.5} & +1.9 & 95.3 & \textbf{95.4} & +0.1 & 97.2 & \textbf{97.4} & +0.2 \\
PaDiM & CORAL-fs & \textbf{59.9} & 54.9 & -5.0 & 61.6 & \textbf{64.0} & +2.4 & \textbf{89.3} & 82.1 & -7.2 & 97.5 & \textbf{98.0} & +0.5 \\
PaDiM & CORAL full-normal & \textbf{59.9} & 54.9 & -5.1 & 61.6 & \textbf{64.2} & +2.7 & \textbf{89.3} & 82.1 & -7.2 & 97.5 & \textbf{98.1} & +0.6 \\
PatchCore & CORAL-fs & \textbf{68.7} & 65.6 & -3.1 & 64.1 & \textbf{66.7} & +2.6 & \textbf{94.3} & 91.7 & -2.6 & 92.1 & \textbf{93.6} & +1.5 \\
PatchCore & CORAL full-normal & \textbf{68.7} & 65.6 & -3.1 & 64.1 & \textbf{66.7} & +2.6 & \textbf{94.3} & 91.7 & -2.6 & 92.1 & \textbf{93.5} & +1.4 \\
WinCLIP & CORAL-fs & 62.9 & \textbf{63.1} & +0.2 & 75.8 & \textbf{76.4} & +0.6 & 75.4 & 75.4 & -0.1 & 80.5 & \textbf{82.6} & +2.2 \\
WinCLIP & CORAL full-normal & 62.9 & \textbf{63.3} & +0.4 & 75.8 & \textbf{76.5} & +0.6 & 75.4 & 75.4 & -0.0 & 80.5 & \textbf{82.6} & +2.1 \\
SPADE & CORAL-fs & 53.8 & \textbf{61.2} & +7.4 & 38.3 & \textbf{50.6} & +12.2 & \textbf{83.7} & 82.1 & -1.6 & 28.9 & \textbf{91.4} & +62.5 \\
SPADE & CORAL full-normal & 53.8 & \textbf{66.1} & +12.3 & 38.3 & \textbf{51.9} & +13.6 & 83.7 & \textbf{84.8} & +1.1 & 28.9 & \textbf{92.7} & +63.8 \\
\midrule
\multicolumn{14}{@{}l}{\textit{AU-PRO$_{0.3}$}} \\
SubspaceAD & CORAL-fs & \textbf{61.3} & 59.4 & -1.9 & \textbf{89.3} & 89.2 & -0.1 & \textbf{96.1} & 96.0 & -0.1 & \textbf{94.0} & 93.7 & -0.3 \\
SubspaceAD & CORAL full-normal & \textbf{61.3} & 59.3 & -2.0 & \textbf{89.3} & 89.2 & -0.1 & \textbf{96.1} & 96.0 & -0.1 & \textbf{94.0} & 93.7 & -0.3 \\
AnomalyDINO & CORAL-fs & 59.9 & \textbf{60.7} & +0.8 & 83.9 & \textbf{86.8} & +2.9 & 92.4 & 92.4 & +0.0 & 89.4 & 89.4 & +0.0 \\
AnomalyDINO & CORAL full-normal & 59.9 & \textbf{60.7} & +0.8 & 83.9 & \textbf{86.9} & +3.0 & 92.4 & 92.4 & +0.0 & 89.4 & 89.4 & +0.0 \\
PaDiM & CORAL-fs & \textbf{41.0} & 38.1 & -2.9 & \textbf{82.0} & 77.9 & -4.1 & \textbf{82.8} & 73.9 & -8.9 & 75.5 & \textbf{78.4} & +2.9 \\
PaDiM & CORAL full-normal & \textbf{41.0} & 38.2 & -2.8 & \textbf{82.0} & 78.0 & -4.0 & \textbf{82.8} & 73.9 & -8.9 & 75.5 & \textbf{78.4} & +2.9 \\
PatchCore & CORAL-fs & \textbf{42.8} & 36.4 & -6.4 & \textbf{85.1} & 79.8 & -5.3 & \textbf{87.8} & 81.0 & -6.8 & 73.2 & \textbf{77.8} & +4.6 \\
PatchCore & CORAL full-normal & \textbf{42.8} & 36.3 & -6.5 & \textbf{85.1} & 80.0 & -5.2 & \textbf{87.8} & 81.0 & -6.8 & 73.2 & \textbf{77.7} & +4.4 \\
WinCLIP & CORAL-fs & 45.0 & \textbf{45.2} & +0.2 & 82.6 & \textbf{82.7} & +0.1 & \textbf{73.6} & 73.5 & -0.0 & 55.6 & \textbf{60.4} & +4.8 \\
WinCLIP & CORAL full-normal & 45.0 & \textbf{45.2} & +0.3 & 82.6 & \textbf{82.7} & +0.1 & 73.6 & 73.6 & -0.0 & 55.6 & \textbf{60.5} & +4.8 \\
SPADE & CORAL-fs & 42.0 & \textbf{44.2} & +2.2 & \textbf{89.0} & 87.2 & -1.7 & \textbf{87.9} & 85.0 & -2.9 & 79.9 & \textbf{82.8} & +2.9 \\
SPADE & CORAL full-normal & 42.0 & \textbf{44.6} & +2.6 & \textbf{89.0} & 87.2 & -1.7 & \textbf{87.9} & 85.0 & -2.9 & 79.9 & \textbf{82.7} & +2.8 \\
\midrule
\multicolumn{14}{@{}l}{\textit{AU-PRO$_{0.05}$}} \\
SubspaceAD & CORAL-fs & \textbf{35.4} & 33.7 & -1.7 & \multicolumn{3}{c|}{--} & \multicolumn{3}{c|}{--} & \multicolumn{3}{c}{--} \\
SubspaceAD & CORAL full-normal & \textbf{35.4} & 33.6 & -1.9 & \multicolumn{3}{c|}{--} & \multicolumn{3}{c|}{--} & \multicolumn{3}{c}{--} \\
AnomalyDINO & CORAL-fs & 37.0 & \textbf{37.2} & +0.3 & \multicolumn{3}{c|}{--} & \multicolumn{3}{c|}{--} & \multicolumn{3}{c}{--} \\
AnomalyDINO & CORAL full-normal & 37.0 & \textbf{37.2} & +0.3 & \multicolumn{3}{c|}{--} & \multicolumn{3}{c|}{--} & \multicolumn{3}{c}{--} \\
PaDiM & CORAL-fs & \textbf{11.3} & 9.1 & -2.2 & \multicolumn{3}{c|}{--} & \multicolumn{3}{c|}{--} & \multicolumn{3}{c}{--} \\
PaDiM & CORAL full-normal & \textbf{11.3} & 9.1 & -2.2 & \multicolumn{3}{c|}{--} & \multicolumn{3}{c|}{--} & \multicolumn{3}{c}{--} \\
PatchCore & CORAL-fs & \textbf{19.0} & 12.3 & -6.7 & \multicolumn{3}{c|}{--} & \multicolumn{3}{c|}{--} & \multicolumn{3}{c}{--} \\
PatchCore & CORAL full-normal & \textbf{19.0} & 12.2 & -6.8 & \multicolumn{3}{c|}{--} & \multicolumn{3}{c|}{--} & \multicolumn{3}{c}{--} \\
WinCLIP & CORAL-fs & 17.1 & \textbf{17.2} & +0.2 & \multicolumn{3}{c|}{--} & \multicolumn{3}{c|}{--} & \multicolumn{3}{c}{--} \\
WinCLIP & CORAL full-normal & 17.1 & \textbf{17.3} & +0.2 & \multicolumn{3}{c|}{--} & \multicolumn{3}{c|}{--} & \multicolumn{3}{c}{--} \\
SPADE & CORAL-fs & 18.4 & \textbf{19.8} & +1.3 & \multicolumn{3}{c|}{--} & \multicolumn{3}{c|}{--} & \multicolumn{3}{c}{--} \\
SPADE & CORAL full-normal & 18.4 & \textbf{19.8} & +1.4 & \multicolumn{3}{c|}{--} & \multicolumn{3}{c|}{--} & \multicolumn{3}{c}{--} \\
\bottomrule
\end{tabular}
}%
\end{table*}

\begin{table*}[!ht]
\caption{\textbf{CORAL calibration budget.} Few-shot estimates use $k\in\{2,4,8\}$. The full-normal diagnostic uses all ground-truth normal deployment images, including normals in the evaluation subset.}
\label{tab:coral-budget-sweep}
\centering
\setlength{\tabcolsep}{5pt}
\renewcommand{\arraystretch}{1.02}
\begin{tabular}{@{}lrrrrr@{}}
\toprule
\textbf{Method} & base & $\Delta C_{fs,k=2}$ & $\Delta C_{fs,k=4}$ & $\Delta C_{fs,k=8}$ & $\Delta C_{full}$ \\
\midrule
\multicolumn{6}{@{}l}{\textit{Image AUROC}} \\
SubspaceAD & 84.7 & +0.4 & +0.5 & +0.7 & +0.7 \\
AnomalyDINO & 84.0 & +0.5 & +0.6 & +0.7 & +0.8 \\
PaDiM & 77.1 & -2.4 & -2.2 & -2.3 & -2.3 \\
PatchCore & 79.8 & -1.0 & -0.7 & -0.4 & -0.4 \\
WinCLIP & 73.7 & +0.6 & +0.8 & +0.7 & +0.8 \\
SPADE & 51.2 & +16.3 & +19.9 & +20.2 & +22.7 \\
\midrule
\multicolumn{6}{@{}l}{\textit{AU-PRO$_{0.3}$}} \\
SubspaceAD & 85.2 & -0.6 & -0.6 & -0.6 & -0.6 \\
AnomalyDINO & 81.4 & +0.9 & +0.8 & +0.9 & +0.9 \\
PaDiM & 70.3 & -3.2 & -3.1 & -3.3 & -3.2 \\
PatchCore & 72.2 & -3.8 & -3.4 & -3.5 & -3.5 \\
WinCLIP & 64.2 & +1.1 & +1.4 & +1.3 & +1.3 \\
SPADE & 74.7 & +0.1 & +0.2 & +0.1 & +0.2 \\
\midrule
\multicolumn{6}{@{}l}{\textit{AU-PRO$_{0.05}$}} \\
SubspaceAD & 35.4 & -1.8 & -1.7 & -1.7 & -1.9 \\
AnomalyDINO & 37.0 & +0.4 & +0.2 & +0.3 & +0.3 \\
PaDiM & 11.3 & -2.2 & -2.2 & -2.2 & -2.2 \\
PatchCore & 19.0 & -6.8 & -6.9 & -6.7 & -6.8 \\
WinCLIP & 17.1 & +0.4 & +0.4 & +0.2 & +0.2 \\
SPADE & 18.4 & +1.2 & +1.3 & +1.3 & +1.4 \\
\bottomrule
\end{tabular}
\end{table*}

\subsection{Normalization-Statistic Adaptation}
\label{sec:supp-bn}

We adapt the batch-normalization (BN) statistics of the common WRN-50 encoder used by PaDiM, PatchCore, and SPADE~\citep{schneider2020improving}.
The calibration-only variant uses statistics estimated from the same $k{=}8$ deployment normals as SPARC.
The source-mixed variant combines these statistics with the source running statistics using a pseudo-count of $N{=}16$.
We also include a transductive full-normal diagnostic estimated from all ground-truth normal deployment images, including normals in the evaluation subset.

\begin{sidewaystable*}[p]
\caption{\textbf{Detailed feature-level adaptation controls.}
Results use PaDiM, PatchCore, and SPADE with a common WRN-50 encoder at $k{=}8$.
All rows are paired changes in percentage points (pp).
Values are category macro-averages per seed and are reported as mean$\pm$sample SD over five seeds.
For BN adaptation, $N$ is the source-statistics pseudo-count.
$N{=}0$ uses only calibration statistics, while $N{=}16$ blends source and calibration statistics.
$^\dagger$Full-normal diagnostics use all ground-truth normal deployment images, including normals in the evaluation subset, and are nondeployable transductive controls.}
\label{tab:normalization-comparison-detailed}
\centering
\setlength{\tabcolsep}{1.8pt}
\renewcommand{\arraystretch}{1.02}
\resizebox{\textheight}{!}{%
\begin{tabular}{@{}ll rrr rr rr rr@{}}
\toprule
& & \multicolumn{3}{c}{MVTec AD~2} & \multicolumn{2}{c}{AeBAD-S} & \multicolumn{2}{c}{VisA} & \multicolumn{2}{c}{RAD} \\
\cmidrule(lr){3-5}
\cmidrule(lr){6-7}
\cmidrule(lr){8-9}
\cmidrule(lr){10-11}
Method
& Variant
& Image AUROC
& AU-PRO$_{0.05}$
& AU-PRO$_{0.3}$
& Image AUROC
& AU-PRO$_{0.3}$
& Image AUROC
& AU-PRO$_{0.3}$
& Image AUROC
& AU-PRO$_{0.3}$ \\
\midrule

PaDiM \citep{defard2021padim}
& CORAL few-shot
& $-$5.0$\pm$0.6
& $-$2.2$\pm$0.1
& $-$2.9$\pm$0.2
& +2.4$\pm$0.6
& $-$4.1$\pm$0.2
& $-$7.2$\pm$0.2
& $-$8.9$\pm$0.1
& +0.5$\pm$0.5
& +2.9$\pm$0.6 \\

& \shortstack[l]{Calibration-only BN}
& +3.1$\pm$0.9
& +7.3$\pm$0.3
& +2.5$\pm$0.2
& $-$8.0$\pm$0.6
& $-$2.0$\pm$0.3
& $-$0.1$\pm$0.2
& $-$0.6$\pm$0.1
& $-$2.8$\pm$2.6
& $-$10.5$\pm$2.6 \\

& \shortstack[l]{Source-mixed BN}
& +4.6$\pm$0.5
& +6.4$\pm$0.1
& +3.5$\pm$0.1
& $-$0.3$\pm$0.3
& +1.0$\pm$0.1
& 0.0$\pm$0.1
& $-$0.1$\pm$0.0
& $-$1.5$\pm$0.3
& +4.8$\pm$0.1 \\

& CORAL full-normal$^\dagger$
& $-$5.1$\pm$0.7
& $-$2.2$\pm$0.0
& $-$2.8$\pm$0.0
& +2.7$\pm$0.1
& $-$4.0$\pm$0.0
& $-$7.2$\pm$0.2
& $-$8.9$\pm$0.0
& +0.6$\pm$0.4
& +2.9$\pm$0.0 \\

& BN full-normal$^\dagger$
& +3.8$\pm$0.4
& +7.7$\pm$0.0
& +2.7$\pm$0.0
& $-$7.3$\pm$0.2
& $-$1.3$\pm$0.0
& +0.1$\pm$0.2
& $-$0.5$\pm$0.0
& $-$0.2$\pm$0.5
& $-$8.7$\pm$0.0 \\

& SPARC (ours)
& +21.7$\pm$2.9
& +5.8$\pm$3.1
& +6.5$\pm$1.2
& +1.0$\pm$0.5
& +0.6$\pm$0.1
& +0.3$\pm$0.1
& +0.1$\pm$0.1
& +0.3$\pm$0.1
& +0.9$\pm$0.1 \\

\addlinespace[2pt]
\midrule

PatchCore \citep{roth2022patchcore}
& CORAL few-shot
& $-$3.1$\pm$0.5
& $-$6.7$\pm$0.3
& $-$6.4$\pm$0.2
& +2.6$\pm$1.1
& $-$5.3$\pm$0.3
& $-$2.6$\pm$0.1
& $-$6.8$\pm$0.0
& +1.5$\pm$0.4
& +4.6$\pm$1.0 \\

& \shortstack[l]{Calibration-only BN}
& $-$1.3$\pm$1.8
& +2.4$\pm$0.3
& +2.4$\pm$0.4
& $-$9.3$\pm$1.3
& $-$3.0$\pm$1.1
& +0.8$\pm$0.1
& $-$0.3$\pm$0.1
& +2.7$\pm$0.6
& $-$0.8$\pm$2.1 \\

& \shortstack[l]{Source-mixed BN}
& +1.8$\pm$1.3
& +5.2$\pm$0.7
& +4.4$\pm$0.6
& +2.4$\pm$0.8
& +1.1$\pm$0.4
& +0.5$\pm$0.2
& 0.0$\pm$0.2
& +0.3$\pm$0.7
& +4.4$\pm$0.4 \\

& CORAL full-normal$^\dagger$
& $-$3.1$\pm$0.4
& $-$6.8$\pm$0.1
& $-$6.5$\pm$0.0
& +2.6$\pm$0.4
& $-$5.2$\pm$0.0
& $-$2.6$\pm$0.1
& $-$6.8$\pm$0.0
& +1.4$\pm$0.1
& +4.4$\pm$0.1 \\

& BN full-normal$^\dagger$
& +0.2$\pm$0.5
& +2.4$\pm$0.1
& +1.9$\pm$0.1
& $-$7.2$\pm$0.3
& $-$2.0$\pm$0.0
& +0.5$\pm$0.1
& $-$0.4$\pm$0.0
& +3.2$\pm$0.3
& $-$0.4$\pm$0.1 \\

& SPARC (ours)
& +14.8$\pm$2.0
& +1.3$\pm$1.8
& +2.9$\pm$1.9
& +2.2$\pm$1.3
& +1.2$\pm$0.2
& $-$0.3$\pm$0.3
& $-$1.3$\pm$0.3
& +4.0$\pm$0.4
& +4.7$\pm$1.3 \\

\addlinespace[2pt]
\midrule

SPADE \citep{cohen2020spade}
& CORAL few-shot
& +7.4$\pm$1.9
& +1.3$\pm$0.1
& +2.2$\pm$0.3
& +12.2$\pm$2.9
& $-$1.7$\pm$0.0
& $-$1.6$\pm$1.7
& $-$2.9$\pm$0.0
& +62.5$\pm$2.2
& +2.9$\pm$0.3 \\

& \shortstack[l]{Calibration-only BN}
& +4.4$\pm$0.7
& +3.6$\pm$0.1
& +7.8$\pm$0.1
& +0.1$\pm$1.4
& $-$1.0$\pm$0.1
& +0.2$\pm$0.4
& $-$1.1$\pm$0.1
& +14.8$\pm$4.1
& +1.7$\pm$1.7 \\

& \shortstack[l]{Source-mixed BN}
& $-$0.5$\pm$0.4
& +6.7$\pm$0.1
& +8.9$\pm$0.0
& $-$0.1$\pm$0.8
& +0.3$\pm$0.1
& +0.5$\pm$0.2
& +0.2$\pm$0.0
& +1.1$\pm$0.4
& +11.3$\pm$0.2 \\

& CORAL full-normal$^\dagger$
& +12.3$\pm$0.7
& +1.4$\pm$0.0
& +2.6$\pm$0.0
& +13.6$\pm$0.4
& $-$1.7$\pm$0.0
& +1.1$\pm$0.1
& $-$2.9$\pm$0.0
& +63.8$\pm$0.8
& +2.8$\pm$0.0 \\

& BN full-normal$^\dagger$
& +4.8$\pm$1.0
& +3.9$\pm$0.0
& +7.9$\pm$0.0
& +0.9$\pm$0.3
& $-$0.7$\pm$0.0
& +0.6$\pm$0.1
& $-$1.1$\pm$0.0
& +17.5$\pm$0.5
& +1.6$\pm$0.0 \\

& SPARC (ours)
& +17.2$\pm$2.3
& +1.1$\pm$0.1
& +4.6$\pm$0.7
& +5.2$\pm$1.0
& +0.5$\pm$0.1
& +0.5$\pm$0.6
& $-$1.9$\pm$0.1
& +5.8$\pm$0.8
& $-$0.6$\pm$0.2 \\
\bottomrule
\end{tabular}%
}
\end{sidewaystable*}

BN produces a metric-dependent tradeoff.
SPARC provides the largest shift-prone Image AUROC gain, while source-mixed BN achieves the largest aggregate localization gains.
The detailed results in \Cref{tab:normalization-comparison-detailed} show that the BN localization gains are concentrated in particular detectors and datasets, especially SPADE, rather than being uniform across the three detectors.
The full-normal diagnostic does not remove this detector dependence, suggesting that the tradeoff is not explained solely by noisy few-shot statistics.

\section{Design Ablations}
\label{sec:supp-design}
\subsection{Correction-Mode Ablation}
\label{sec:supp-ab1-extended}
\begin{table*}[!ht]
\caption{\textbf{Correction-mode ablation.} \emph{base}: no correction; \emph{mean-only}: per-cell centroid subtraction; \emph{SPARC (global subspace)}: a single rank-$(k{-}1)$ subspace pooled over all cells; \emph{SPARC (per-cell)}: our method. \textbf{Bold} marks the best entry per column; Pooled~$\Delta$ is the change vs.\ base. Image AUROC and AU-PRO$_{0.3}$ pool all 28 dataset--category pairs; AU-PRO$_{0.05}$ pools the eight MVTec AD~2 categories only. All results use $k{=}8$.}
\label{tab:ab1}
\centering
\setlength{\tabcolsep}{1.5pt}
\renewcommand{\arraystretch}{1.08}
\resizebox{\textwidth}{!}{%
\begin{tabular}{@{}l c c c c c c c c c c@{}}
\toprule
Variant & PatchCore & PaDiM & SPADE & \shortstack{Anomaly\\DINO} & \shortstack{Subspace\\AD} & MuSc & \shortstack{Anomaly\\CLIP$_\mathrm{V}$} & \shortstack{Anomaly\\CLIP$_\mathrm{M}$} & Pooled & \shortstack{Pooled\\$\Delta$} \\
\midrule
\multicolumn{11}{l}{\textit{Image AUROC}} \\
base & 82.4 & 78.1 & 60.8 & 85.5 & 86.5 & 66.6 & \textbf{72.8} & \textbf{76.7} & 76.2 & 0.0 \\
mean-only & 86.4 & 78.1 & 60.9 & 64.1 & 92.3 & 25.3 & \textbf{72.8} & \textbf{76.7} & 69.6 & $-$6.6 \\
SPARC (global subspace) & 84.0 & 78.9 & 63.8 & 84.5 & 88.3 & 66.5 & \textbf{72.8} & \textbf{76.7} & 76.9 & +0.8 \\
SPARC (per-cell, ours) & \textbf{87.3} & \textbf{84.6} & \textbf{67.5} & \textbf{87.9} & \textbf{92.9} & \textbf{76.2} & \textbf{72.8} & \textbf{76.7} & \textbf{80.8} & \textbf{+4.6} \\
\midrule
\multicolumn{11}{l}{\textit{AU-PRO$_{0.3}$}} \\
base & 72.5 & 69.7 & 73.8 & 81.5 & 84.9 & \textbf{78.1} & \textbf{72.2} & 71.9 & 75.6 & 0.0 \\
mean-only & 72.7 & 69.7 & 74.0 & 37.1 & 87.1 & 29.8 & 52.7 & 59.6 & 60.4 & $-$15.2 \\
SPARC (global subspace) & \textbf{73.8} & 70.9 & \textbf{74.3} & 77.2 & 85.9 & 77.1 & 67.3 & 68.7 & 74.4 & $-$1.2 \\
SPARC (per-cell, ours) & 73.6 & \textbf{71.8} & \textbf{74.3} & \textbf{82.9} & \textbf{87.7} & 77.7 & 71.8 & \textbf{72.2} & \textbf{76.5} & \textbf{+0.9} \\
\midrule
\multicolumn{11}{l}{\textit{AU-PRO$_{0.05}$ (MVTec AD~2)}} \\
base & 19.0 & 11.3 & 18.4 & 37.0 & 35.4 & 26.6 & 25.2 & 25.2 & 24.8 & 0.0 \\
mean-only & 17.2 & 11.3 & \textbf{19.5} & 6.3 & 44.5 & 2.5 & 9.0 & 10.2 & 15.1 & $-$9.7 \\
SPARC (global subspace) & 20.3 & 15.2 & 18.6 & 33.0 & 39.0 & 27.3 & 22.5 & 23.5 & 24.9 & +0.1 \\
SPARC (per-cell, ours) & \textbf{20.4} & \textbf{17.1} & \textbf{19.5} & \textbf{38.8} & \textbf{45.6} & \textbf{29.9} & \textbf{25.5} & \textbf{25.5} & \textbf{27.8} & \textbf{+3.0} \\
\bottomrule
\end{tabular}
}%
\end{table*}

\Cref{tab:ab1} reports the correction-mode ablation pooled across all 28 dataset--category pairs at $k{=}8$.
Per-cell SPARC (our method) is the only mode with consistent pooled gains on all three metrics, whereas per-cell mean subtraction alone degrades every metric and a single global subspace is essentially neutral.
Why per-cell subspace removal beats mean subtraction is analyzed in \Cref{sec:supp-mechanism}.

\subsection{Projection-Rank Ablation}
\label{sec:supp-rank}
\Cref{tab:supp-ab2-rank} reports the full per-detector rank sweep. 
Across MVTec AD~2, the gains generally increase with rank and approach a plateau near the algebraic saturation rank, although the exact empirical optimum varies by detector and metric.
On shift-free VisA, the changes remain small and mixed.
We therefore use $r{=}k{-}1$ as a validation-free algebraic saturation choice rather than selecting it through significance testing.

\begin{table*}[!ht]
\caption{\textbf{Projection-rank sensitivity.} Paired changes as the retained rank approaches $k{-}1$.}
\label{tab:supp-ab2-rank}
\centering
\setlength{\tabcolsep}{1.65pt}
\renewcommand{\arraystretch}{0.96}
\resizebox{\textwidth}{!}{%
\begin{tabular}{@{}l|rrrrrrr|rrrrrrr@{}}
\toprule
& \multicolumn{7}{c|}{\textbf{MVTec AD~2}} & \multicolumn{7}{c}{\textbf{VisA}} \\
\cmidrule(lr){2-8}\cmidrule(lr){9-15}
\textbf{Method} & $r{=}1$ & $r{=}2$ & $r{=}3$ & $r{=}4$ & $r{=}5$ & $r{=}6$ & $r{=}7$ & $r{=}1$ & $r{=}2$ & $r{=}3$ & $r{=}4$ & $r{=}5$ & $r{=}6$ & $r{=}7$ \\
\midrule
\multicolumn{15}{@{}l}{\textit{Image AUROC $\Delta$}} \\
SubspaceAD & +10.2 & +13.1 & +14.1 & +14.5 & +14.8 & +14.9 & +14.9 & +0.1 & +0.1 & +0.1 & +0.1 & +0.1 & +0.1 & +0.1 \\
AnomalyDINO & +2.7 & +5.8 & +8.0 & +9.3 & +9.9 & +10.2 & +10.3 & -0.2 & -0.6 & -1.0 & -1.2 & -1.4 & -1.5 & -1.6 \\
PaDiM & +12.3 & +18.9 & +21.1 & +21.4 & +21.6 & +21.6 & +21.7 & +0.2 & +0.2 & +0.3 & +0.3 & +0.3 & +0.3 & +0.3 \\
PatchCore & +5.0 & +10.1 & +13.7 & +14.8 & +15.3 & +15.5 & +14.8 & +0.3 & +0.0 & +0.0 & -0.2 & -0.2 & -0.1 & -0.3 \\
MuSc & +14.7 & +25.8 & +30.8 & +32.3 & +32.8 & +33.0 & +33.1 & -1.1 & -1.4 & -1.5 & -1.5 & -1.5 & -1.4 & -1.4 \\
WinCLIP & +6.7 & +11.3 & +14.1 & +15.5 & +16.0 & +16.1 & +16.4 & +0.9 & +1.1 & +1.1 & +1.1 & +1.1 & +1.1 & +1.1 \\
AnomalyCLIP$_{\mathrm{V}}$ & +0.0 & +0.0 & +0.0 & +0.0 & +0.0 & +0.0 & +0.0 & +0.0 & +0.0 & +0.0 & +0.0 & +0.0 & +0.0 & +0.0 \\
AnomalyCLIP$_{\mathrm{M}}$ & +0.0 & +0.0 & +0.0 & +0.0 & +0.0 & +0.0 & +0.0 & +0.0 & +0.0 & +0.0 & +0.0 & +0.0 & +0.0 & +0.0 \\
SPADE & +5.7 & +9.8 & +13.1 & +15.2 & +16.1 & +16.7 & +17.2 & -0.1 & +0.3 & +0.3 & +0.4 & +0.5 & +0.6 & +0.5 \\
\midrule
\multicolumn{15}{@{}l}{\textit{AU-PRO$_{0.3}$ $\Delta$}} \\
SubspaceAD & +5.7 & +7.0 & +7.3 & +7.6 & +7.7 & +7.9 & +8.0 & -0.0 & -0.1 & -0.1 & -0.1 & -0.1 & -0.1 & -0.2 \\
AnomalyDINO & +0.1 & +1.0 & +2.0 & +2.8 & +3.3 & +3.7 & +3.8 & -0.0 & -0.2 & -0.3 & -0.3 & -0.3 & -0.4 & -0.4 \\
PaDiM & +4.6 & +5.7 & +6.2 & +6.3 & +6.4 & +6.5 & +6.5 & +0.1 & +0.1 & +0.1 & +0.1 & +0.1 & +0.1 & +0.1 \\
PatchCore & +0.2 & +1.6 & +2.1 & +2.8 & +2.4 & +3.6 & +2.9 & -0.7 & -0.7 & -0.9 & -1.1 & -1.2 & -1.1 & -1.3 \\
MuSc & +1.0 & +2.3 & +3.3 & +3.6 & +3.7 & +3.7 & +3.7 & -1.5 & -2.0 & -2.3 & -2.5 & -2.6 & -2.6 & -2.7 \\
WinCLIP & +0.1 & +0.8 & +1.5 & +1.9 & +2.0 & +2.0 & +2.1 & +0.5 & +0.6 & +0.7 & +0.7 & +0.8 & +0.8 & +0.8 \\
AnomalyCLIP$_{\mathrm{V}}$ & -0.1 & +0.1 & +0.0 & +0.0 & +0.0 & +0.0 & -0.1 & -0.3 & -0.2 & -0.3 & -0.2 & -0.2 & -0.2 & -0.2 \\
AnomalyCLIP$_{\mathrm{M}}$ & +0.6 & +1.3 & +1.4 & +1.4 & +1.4 & +1.3 & +1.2 & -0.2 & -0.1 & -0.1 & -0.0 & +0.0 & +0.1 & +0.1 \\
SPADE & +1.0 & +2.5 & +3.5 & +4.1 & +4.4 & +4.5 & +4.6 & -0.6 & -1.1 & -1.4 & -1.6 & -1.7 & -1.8 & -1.9 \\
\midrule
\multicolumn{15}{@{}l}{\textit{AU-PRO$_{0.05}$ $\Delta$}} \\
SubspaceAD & +8.2 & +9.3 & +9.7 & +9.8 & +9.9 & +10.1 & +10.2 & \multicolumn{7}{c}{--} \\
AnomalyDINO & +0.5 & +0.7 & +1.2 & +1.6 & +1.8 & +1.8 & +1.8 & \multicolumn{7}{c}{--} \\
PaDiM & +5.1 & +5.4 & +5.6 & +5.7 & +5.7 & +5.8 & +5.8 & \multicolumn{7}{c}{--} \\
PatchCore & +0.7 & +0.9 & +1.6 & +1.5 & +1.8 & +2.0 & +1.3 & \multicolumn{7}{c}{--} \\
MuSc & +1.4 & +2.0 & +2.6 & +2.9 & +3.1 & +3.2 & +3.2 & \multicolumn{7}{c}{--} \\
WinCLIP & -0.1 & +0.2 & +0.4 & +0.5 & +0.6 & +0.6 & +0.7 & \multicolumn{7}{c}{--} \\
AnomalyCLIP$_{\mathrm{V}}$ & +0.2 & +0.3 & +0.4 & +0.4 & +0.4 & +0.3 & +0.3 & \multicolumn{7}{c}{--} \\
AnomalyCLIP$_{\mathrm{M}}$ & +0.3 & +0.4 & +0.4 & +0.5 & +0.5 & +0.4 & +0.4 & \multicolumn{7}{c}{--} \\
SPADE & +0.4 & +0.7 & +0.9 & +1.0 & +1.0 & +1.0 & +1.1 & \multicolumn{7}{c}{--} \\
\bottomrule
\end{tabular}
}%
\end{table*}

\subsection{Symmetric-Application Ablation}
\label{sec:supp-symmetric}
\Cref{tab:ab5} compares symmetric, train-only, and test-only correction on PatchCore.
Symmetric application achieves the highest mean on both shift-prone benchmarks, while the ranking is mixed on VisA and RAD.
We use symmetric application to keep the reference and test features in the same transformed space.

\begin{table*}[!ht]
\caption{\textbf{Symmetric (train${+}$test) versus one-sided calibration across all four benchmarks ($k{=}8$).} We compare applying SPARC to training features only, test features only, or both. Dataset values are category macro-averages over five seeds; bold denotes the best result in each column.}
\label{tab:ab5}
\centering
\setlength{\tabcolsep}{2.4pt}
\renewcommand{\arraystretch}{1.05}
\resizebox{\textwidth}{!}{%
\begin{tabular}{l ccc cc cc cc}
\toprule
& \multicolumn{3}{c}{\textbf{MVTec AD~2}} & \multicolumn{2}{c}{\textbf{AeBAD-S}} & \multicolumn{2}{c}{\textbf{VisA}} & \multicolumn{2}{c}{\textbf{RAD}} \\
\cmidrule(lr){2-4}\cmidrule(lr){5-6}\cmidrule(lr){7-8}\cmidrule(lr){9-10}
\textbf{Variant} & \shortstack{Image\\AUROC} & AU-PRO$_{0.05}$ & AU-PRO$_{0.3}$ & \shortstack{Image\\AUROC} & AU-PRO$_{0.3}$ & \shortstack{Image\\AUROC} & AU-PRO$_{0.3}$ & \shortstack{Image\\AUROC} & AU-PRO$_{0.3}$ \\
\midrule
base (no correction) & 68.7 & 19.0 & 42.8 & 64.1 & 85.1 & \textbf{94.3} & \textbf{87.8} & 92.1 & 73.2 \\
$+\,$SPARC (train-side only) & 60.2 & 16.9 & 39.9 & 63.8 & 85.2 & 89.0 & 84.6 & 93.2 & 73.8 \\
$+\,$SPARC (test-side only) & 82.9 & 18.6 & 42.6 & 64.6 & 85.6 & 86.8 & 83.3 & \textbf{96.4} & 77.3 \\
$+\,$SPARC (symmetric, ours) & \textbf{83.6} & \textbf{20.4} & \textbf{45.7} & \textbf{66.3} & \textbf{86.3} & 93.9 & 86.5 & 96.2 & \textbf{77.9} \\
\bottomrule
\end{tabular}
}%
\end{table*}

\section{Robustness and Sensitivity}
\label{sec:supp-robustness}
\subsection{Calibration Size}
\label{sec:supp-k-sweep}

\Cref{tab:supp-k-sweep} expands the experiment to the $k \in \{2, 4, 8\}$ sweep. 
On the shift-prone benchmarks, gains often increase with $k$, although the increments vary across detectors and metrics.
A larger calibration set raises the algebraic maximum rank of the estimated disagreement subspace.
On VisA and RAD, the changes are generally smaller and more mixed, with detector-specific exceptions.
Pooled across VisA and RAD, the Image AUROC $|\Delta| \le 1.8$~pp for every detector except MuSc.

\begin{table*}[!ht]
\caption{\textbf{Calibration-size sensitivity.} Paired changes for $k\in\{2,4,8\}$.}
\label{tab:supp-k-sweep}
\centering
\setlength{\tabcolsep}{1.5pt}
\renewcommand{\arraystretch}{0.96}
\begin{tabular}{@{}l|rrr|rrr|rrr|rrr@{}}
\toprule
& \multicolumn{3}{c|}{\textbf{MVTec AD~2}} & \multicolumn{3}{c|}{\textbf{AeBAD-S}} & \multicolumn{3}{c|}{\textbf{VisA}} & \multicolumn{3}{c}{\textbf{RAD}} \\
\cmidrule(lr){2-4}\cmidrule(lr){5-7}\cmidrule(lr){8-10}\cmidrule(lr){11-13}
\textbf{Method} & $k{=}2$ & $k{=}4$ & $k{=}8$ & $k{=}2$ & $k{=}4$ & $k{=}8$ & $k{=}2$ & $k{=}4$ & $k{=}8$ & $k{=}2$ & $k{=}4$ & $k{=}8$ \\
\midrule
\multicolumn{13}{@{}l}{\textit{Image AUROC $\Delta$}} \\
SubspaceAD & +4.9 & +9.0 & +14.9 & +8.2 & +11.7 & +14.5 & +0.0 & +0.0 & +0.1 & +0.1 & +0.2 & +0.1 \\
AnomalyDINO & +1.8 & +4.5 & +10.3 & +0.5 & +0.8 & +1.8 & -0.2 & -0.7 & -1.6 & -0.1 & -0.2 & -0.2 \\
PaDiM & +5.9 & +11.6 & +21.7 & +0.2 & +0.4 & +1.0 & +0.0 & +0.2 & +0.3 & +0.1 & +0.2 & +0.3 \\
PatchCore & +3.5 & +7.8 & +14.8 & +2.4 & +2.2 & +2.2 & -0.3 & -0.4 & -0.3 & +1.2 & +2.7 & +4.0 \\
MuSc & +5.9 & +17.3 & +33.1 & -0.1 & +2.3 & +8.2 & -1.4 & -1.5 & -1.4 & -1.7 & -5.9 & -3.2 \\
WinCLIP & +3.7 & +9.1 & +16.4 & -0.4 & -0.5 & +0.6 & +0.5 & +0.9 & +1.1 & +0.1 & +1.7 & +3.0 \\
AnomalyCLIP$_{\mathrm{V}}$ & +0.0 & +0.0 & +0.0 & +0.0 & +0.0 & +0.0 & +0.0 & +0.0 & +0.0 & +0.0 & +0.0 & +0.0 \\
AnomalyCLIP$_{\mathrm{M}}$ & +0.0 & +0.0 & +0.0 & +0.0 & +0.0 & +0.0 & +0.0 & +0.0 & +0.0 & +0.0 & +0.0 & +0.0 \\
SPADE & +3.6 & +7.2 & +17.2 & +1.0 & +2.2 & +5.2 & -0.1 & +0.2 & +0.5 & +0.4 & +2.7 & +5.8 \\
\midrule
\multicolumn{13}{@{}l}{\textit{AU-PRO$_{0.3}$ $\Delta$}} \\
SubspaceAD & +2.9 & +5.1 & +8.0 & +2.4 & +3.0 & +3.4 & -0.0 & -0.1 & -0.2 & +0.2 & +0.3 & +0.6 \\
AnomalyDINO & -0.3 & +1.0 & +3.8 & -0.1 & +1.3 & +2.8 & -0.3 & -0.4 & -0.4 & -0.5 & +0.3 & +1.1 \\
PaDiM & +2.1 & +3.6 & +6.5 & +0.1 & +0.3 & +0.6 & +0.0 & +0.1 & +0.1 & +0.2 & +0.4 & +0.9 \\
PatchCore & +0.1 & +1.1 & +2.9 & +0.2 & +0.6 & +1.2 & -0.6 & -0.8 & -1.3 & +0.3 & +2.3 & +4.7 \\
MuSc & +0.6 & +1.7 & +3.7 & -1.5 & +0.1 & +0.7 & -1.5 & -2.0 & -2.7 & -1.5 & -2.8 & -3.3 \\
WinCLIP & +0.3 & +0.9 & +2.1 & -0.3 & -0.2 & -0.0 & +0.2 & +0.3 & +0.8 & -0.7 & -0.2 & +1.1 \\
AnomalyCLIP$_{\mathrm{V}}$ & +0.7 & +0.5 & -0.1 & +0.0 & -0.6 & -1.8 & -0.0 & -0.1 & -0.2 & +1.0 & +0.4 & +0.0 \\
AnomalyCLIP$_{\mathrm{M}}$ & +0.9 & +1.1 & +1.2 & -0.0 & -0.2 & -0.7 & +0.1 & +0.1 & +0.1 & +0.2 & +0.3 & +0.6 \\
SPADE & +1.0 & +2.4 & +4.6 & +0.2 & +0.3 & +0.5 & -0.5 & -1.2 & -1.9 & +0.4 & +0.0 & -0.6 \\
\midrule
\multicolumn{13}{@{}l}{\textit{AU-PRO$_{0.05}$ $\Delta$}} \\
SubspaceAD & +3.6 & +6.5 & +10.2 & \multicolumn{3}{c|}{--} & \multicolumn{3}{c|}{--} & \multicolumn{3}{c}{--} \\
AnomalyDINO & -0.0 & +0.5 & +1.8 & \multicolumn{3}{c|}{--} & \multicolumn{3}{c|}{--} & \multicolumn{3}{c}{--} \\
PaDiM & +1.0 & +2.4 & +5.8 & \multicolumn{3}{c|}{--} & \multicolumn{3}{c|}{--} & \multicolumn{3}{c}{--} \\
PatchCore & -0.4 & +0.2 & +1.3 & \multicolumn{3}{c|}{--} & \multicolumn{3}{c|}{--} & \multicolumn{3}{c}{--} \\
MuSc & +0.0 & +1.2 & +3.2 & \multicolumn{3}{c|}{--} & \multicolumn{3}{c|}{--} & \multicolumn{3}{c}{--} \\
WinCLIP & +0.3 & +0.3 & +0.7 & \multicolumn{3}{c|}{--} & \multicolumn{3}{c|}{--} & \multicolumn{3}{c}{--} \\
AnomalyCLIP$_{\mathrm{V}}$ & +0.3 & +0.5 & +0.3 & \multicolumn{3}{c|}{--} & \multicolumn{3}{c|}{--} & \multicolumn{3}{c}{--} \\
AnomalyCLIP$_{\mathrm{M}}$ & +0.6 & +0.7 & +0.4 & \multicolumn{3}{c|}{--} & \multicolumn{3}{c|}{--} & \multicolumn{3}{c}{--} \\
SPADE & +0.4 & +0.3 & +1.1 & \multicolumn{3}{c|}{--} & \multicolumn{3}{c|}{--} & \multicolumn{3}{c}{--} \\
\bottomrule
\end{tabular}
\end{table*}

\subsection{Calibration-Set Contamination}
\label{sec:supp-ab4}

A practical concern is that normality verification is imperfect so that anomalous images may enter the calibration set.
\Cref{tab:supp-ab4} evaluates this robustness by replacing verified normals in the $k{=}8$ calibration set with same-category anomalies and refitting SPARC.
\begin{table}[ht!]

\centering
\captionof{table}{\textbf{Calibration contamination.} Paired changes on MVTec AD~2 as normal calibration images are replaced by same-category anomalies.}
\label{tab:supp-ab4}
\centering
\setlength{\tabcolsep}{2.5pt}
\renewcommand{\arraystretch}{0.92}
\begin{tabular*}{\columnwidth}{@{\extracolsep{\fill}}lrrrrr@{}}
\toprule
\textbf{Method} & 0/8 & 1/8 & 2/8 & 3/8 & 4/8 \\
\midrule
\multicolumn{6}{@{}l}{\textit{Image AUROC $\Delta$}} \\
SubspaceAD & +14.9 & +11.5 & +8.5 & +3.5 & +0.5 \\
AnomalyDINO & +10.3 & +7.3 & +4.2 & +0.1 & -3.0 \\
PaDiM & +21.7 & +18.7 & +13.7 & +7.5 & +3.4 \\
PatchCore & +14.8 & +11.7 & +8.0 & +2.6 & -1.7 \\
MuSc & +33.1 & +27.9 & +22.0 & +13.3 & +6.2 \\
WinCLIP & +16.4 & +13.4 & +10.3 & +6.0 & +1.5 \\
AnomalyCLIP$_{\mathrm{V}}$ & +0.0 & +0.0 & +0.0 & +0.0 & +0.0 \\
AnomalyCLIP$_{\mathrm{M}}$ & +0.0 & +0.0 & +0.0 & +0.0 & +0.0 \\
SPADE & +17.2 & +14.6 & +12.8 & +6.4 & +4.9 \\
\midrule
\multicolumn{6}{@{}l}{\textit{AU-PRO$_{0.3}$ $\Delta$}} \\
SubspaceAD & +8.0 & +5.1 & +2.6 & +1.1 & -2.5 \\
AnomalyDINO & +3.8 & +1.3 & -1.7 & -2.9 & -5.1 \\
PaDiM & +6.5 & +4.8 & +2.8 & +1.5 & -1.1 \\
PatchCore & +2.9 & +1.2 & -1.5 & -2.1 & -4.2 \\
MuSc & +3.7 & +1.9 & -0.1 & -2.0 & -5.3 \\
WinCLIP & +2.1 & +1.3 & +0.9 & +0.5 & -1.5 \\
AnomalyCLIP$_{\mathrm{V}}$ & -0.1 & -0.4 & -3.1 & -3.8 & -4.7 \\
AnomalyCLIP$_{\mathrm{M}}$ & +1.2 & +0.7 & -1.7 & -3.3 & -5.2 \\
SPADE & +4.6 & +3.2 & +1.6 & +0.2 & -2.1 \\
\midrule
\multicolumn{6}{@{}l}{\textit{AU-PRO$_{0.05}$ $\Delta$}} \\
SubspaceAD & +10.2 & +7.8 & +5.5 & +4.2 & +2.4 \\
AnomalyDINO & +1.8 & -0.1 & -2.0 & -3.6 & -4.5 \\
PaDiM & +5.8 & +5.5 & +4.0 & +1.4 & +3.0 \\
PatchCore & +1.3 & +1.5 & -0.1 & -1.1 & -1.9 \\
MuSc & +3.2 & +2.1 & +0.8 & -0.4 & -2.5 \\
WinCLIP & +0.7 & -0.1 & -0.4 & -0.5 & -2.0 \\
AnomalyCLIP$_{\mathrm{V}}$ & +0.3 & -0.3 & -2.4 & -2.4 & -3.5 \\
AnomalyCLIP$_{\mathrm{M}}$ & +0.4 & -0.5 & -2.2 & -2.9 & -3.6 \\
SPADE & +1.1 & +0.2 & -0.1 & -1.4 & -2.5 \\
\bottomrule
\end{tabular*}

\end{table}

All seven detectors whose image scores are affected by SPARC retain positive Image AUROC gains through $3/8$ contamination.
The pooled Image AUROC gain is $+4.4$~pp at $3/8$ and remains positive at $+1.3$~pp at $4/8$, although detector-specific declines emerge.
Pooled AU-PRO$_{0.3}$ degrades earlier and reaches $-3.5$~pp at $4/8$, indicating greater sensitivity of localization to calibration contamination.

This suggests a potential label-free deployment mode in which the first few images from a low-defect-rate lot are used without verifying each image as normal, provided that calibration batches are unlikely to contain $50\%$ or more anomalous images.
The present evidence for this mode is limited to image-level detection as localization degrades at lower contamination levels. 
Calibration from an unfiltered deployment stream remains to be evaluated.

\subsection{Backbone}
\label{sec:supp-ab3}
A central design claim of SPARC is that it operates on per-patch feature representations regardless of how those representations were obtained. 
\Cref{tab:supp-ab3} tests this empirically by pairing SPARC with PatchCore using eight different feature encoders spanning four families: ResNet variants (ResNet-50, WideResNet-50); modern CNN / hybrid (ConvNeXt-B, Swin-B); self-supervised vision transformers (DINOv2-B, DINOv3-B, MAE-B); vision-language (CLIP-B). 
All eight backbones are off-the-shelf pretrained checkpoints from their respective releases.
No fine-tuning is performed.

On MVTec AD~2, every evaluated backbone yields a positive Image AUROC change, ranging from $+13.2$~pp for CLIP-B to $+20.4$~pp for MAE-B.
Results on the other benchmarks are more heterogeneous, including large negative Image AUROC changes for CLIP-B on VisA and RAD and for Swin-B on RAD.
Taken together, these results support SPARC's architectural portability across diverse pretrained feature representations, with consistent gains under the targeted MVTec AD~2 shift and mixed effects elsewhere.

\subsection{VisA Input-Geometry Sensitivity}
\label{sec:supp-visa-geometry}
The main VisA experiments use the resize-and-center-crop preprocessing inherited from the reference pipelines of PatchCore, PaDiM, and SPADE.
Since most VisA images are non-square, we additionally evaluate an aspect-ratio-preserving full-frame variant with padding.
We hold the official one-class split, detector configurations, seed-specific calibration draws, and held-out image identities fixed, changing only the input geometry.

\Cref{tab:visa-fullframe-comparison} shows that input geometry can affect absolute detector performance. 
Still, the sign of the paired SPARC-minus-base change is unchanged for all eight method--metric pairs.
Thus, the qualitative conclusion on VisA does not depend on the center-crop choice, although individual effect magnitudes can change.

\begin{table*}[!ht]
\caption{\textbf{Backbone sensitivity.} PatchCore results with eight off-the-shelf feature encoders at $k{=}8$.}
\label{tab:supp-ab3}
\centering
\setlength{\tabcolsep}{1.5pt}
\renewcommand{\arraystretch}{0.92}
\begin{tabular}{@{}l|rrr|rrr|rrr|rrr@{}}
\toprule
& \multicolumn{3}{c|}{\textbf{MVTec AD~2}} & \multicolumn{3}{c|}{\textbf{AeBAD-S}} & \multicolumn{3}{c|}{\textbf{VisA}} & \multicolumn{3}{c}{\textbf{RAD}} \\
\cmidrule(lr){2-4}\cmidrule(lr){5-7}\cmidrule(lr){8-10}\cmidrule(lr){11-13}
\textbf{Backbone} & base & +S & $\Delta$ & base & +S & $\Delta$ & base & +S & $\Delta$ & base & +S & $\Delta$ \\
\midrule
\multicolumn{13}{@{}l}{\textit{Image AUROC}} \\
clip\_b & 64.5 & \textbf{77.7} & +13.2 & \textbf{61.5} & 60.7 & -0.8 & \textbf{90.2} & 84.1 & -6.0 & \textbf{90.5} & 74.2 & -16.3 \\
convnext\_b & 61.8 & \textbf{78.3} & +16.6 & \textbf{68.3} & 67.0 & -1.4 & \textbf{83.6} & 83.5 & -0.1 & 96.6 & \textbf{96.7} & +0.1 \\
dinov2\_b & 66.6 & \textbf{82.7} & +16.2 & 61.5 & \textbf{67.1} & +5.6 & \textbf{94.5} & 92.7 & -1.8 & 95.6 & \textbf{96.1} & +0.5 \\
dinov3\_b & 65.2 & \textbf{83.0} & +17.8 & 76.6 & 76.6 & -0.0 & \textbf{95.4} & 94.1 & -1.3 & 96.5 & \textbf{96.9} & +0.3 \\
mae\_b & 68.0 & \textbf{88.4} & +20.4 & \textbf{56.7} & 54.9 & -1.8 & \textbf{87.3} & 86.8 & -0.5 & \textbf{99.1} & 98.3 & -0.8 \\
resnet50 & 68.6 & \textbf{85.9} & +17.3 & 65.5 & \textbf{69.3} & +3.7 & \textbf{91.7} & 91.3 & -0.4 & 97.4 & \textbf{98.0} & +0.5 \\
swin\_b & 53.7 & \textbf{71.6} & +17.9 & 43.3 & \textbf{47.4} & +4.1 & 54.5 & \textbf{56.8} & +2.3 & \textbf{63.5} & 57.9 & -5.7 \\
wrn50 & 68.4 & \textbf{86.4} & +18.1 & \textbf{64.8} & 63.8 & -1.1 & \textbf{91.8} & 91.6 & -0.2 & 96.3 & \textbf{97.1} & +0.8 \\
\midrule
\multicolumn{13}{@{}l}{\textit{AU-PRO$_{0.3}$}} \\
clip\_b & 42.4 & \textbf{43.2} & +0.9 & \textbf{73.6} & 71.1 & -2.6 & \textbf{79.8} & 75.9 & -3.8 & \textbf{88.2} & 83.8 & -4.4 \\
convnext\_b & 37.8 & \textbf{41.0} & +3.3 & \textbf{83.6} & 82.6 & -1.0 & 73.9 & \textbf{76.5} & +2.6 & 81.5 & \textbf{83.1} & +1.6 \\
dinov2\_b & 44.0 & \textbf{48.1} & +4.1 & 72.1 & \textbf{76.2} & +4.2 & \textbf{84.4} & 82.5 & -1.8 & 92.4 & \textbf{92.8} & +0.4 \\
dinov3\_b & 45.8 & \textbf{49.4} & +3.6 & 83.5 & \textbf{85.0} & +1.5 & \textbf{85.6} & 83.9 & -1.7 & 92.7 & \textbf{92.9} & +0.2 \\
mae\_b & 40.8 & \textbf{45.8} & +5.0 & 61.9 & \textbf{73.3} & +11.5 & 70.0 & \textbf{70.3} & +0.3 & \textbf{92.5} & 92.3 & -0.2 \\
resnet50 & 39.5 & \textbf{42.6} & +3.1 & 77.1 & \textbf{80.8} & +3.8 & 81.7 & \textbf{82.1} & +0.4 & 88.7 & \textbf{89.3} & +0.5 \\
swin\_b & 14.3 & \textbf{20.2} & +5.9 & 7.9 & \textbf{16.0} & +8.1 & 25.1 & \textbf{36.7} & +11.6 & 13.6 & \textbf{14.3} & +0.7 \\
wrn50 & 42.8 & \textbf{45.9} & +3.1 & \textbf{81.5} & 79.7 & -1.8 & \textbf{82.9} & 82.5 & -0.3 & 87.2 & \textbf{89.6} & +2.4 \\
\midrule
\multicolumn{13}{@{}l}{\textit{AU-PRO$_{0.05}$}} \\
clip\_b & \textbf{21.6} & 20.1 & -1.6 & \multicolumn{3}{c|}{--} & \multicolumn{3}{c|}{--} & \multicolumn{3}{c}{--} \\
convnext\_b & 15.6 & \textbf{17.7} & +2.1 & \multicolumn{3}{c|}{--} & \multicolumn{3}{c|}{--} & \multicolumn{3}{c}{--} \\
dinov2\_b & 20.8 & \textbf{23.1} & +2.4 & \multicolumn{3}{c|}{--} & \multicolumn{3}{c|}{--} & \multicolumn{3}{c}{--} \\
dinov3\_b & 23.0 & \textbf{26.7} & +3.7 & \multicolumn{3}{c|}{--} & \multicolumn{3}{c|}{--} & \multicolumn{3}{c}{--} \\
mae\_b & 17.4 & \textbf{18.6} & +1.2 & \multicolumn{3}{c|}{--} & \multicolumn{3}{c|}{--} & \multicolumn{3}{c}{--} \\
resnet50 & 17.6 & \textbf{19.4} & +1.8 & \multicolumn{3}{c|}{--} & \multicolumn{3}{c|}{--} & \multicolumn{3}{c}{--} \\
swin\_b & 2.1 & \textbf{3.0} & +0.9 & \multicolumn{3}{c|}{--} & \multicolumn{3}{c|}{--} & \multicolumn{3}{c}{--} \\
wrn50 & 21.1 & \textbf{22.0} & +0.9 & \multicolumn{3}{c|}{--} & \multicolumn{3}{c|}{--} & \multicolumn{3}{c}{--} \\
\bottomrule
\end{tabular}
\end{table*}

\begin{table*}[!ht]
\caption{\textbf{VisA input-geometry sensitivity.} Center crop (main) versus aspect-ratio-preserving full-frame resizing with padding (control) at $k{=}8$. Split, settings, calibration draws, and held-out images are fixed. Category macro-averages are mean$\pm$sample SD over five paired seeds. $\Delta$ is SPARC minus base before rounding (pp).}
\label{tab:visa-fullframe-comparison}
\centering
\setlength{\tabcolsep}{2.5pt}
\renewcommand{\arraystretch}{0.92}
\begin{tabular}{@{}l|rrr|rrr@{}}
\toprule
& \multicolumn{3}{c|}{\textbf{Center crop (main)}} & \multicolumn{3}{c}{\textbf{Full frame (control)}} \\
\cmidrule(lr){2-4}\cmidrule(lr){5-7}
\textbf{Method} & base & +S & $\Delta$ & base & +S & $\Delta$ \\
\midrule
\multicolumn{7}{@{}l}{\textit{Image AUROC}} \\
AnomalyDINO & \textbf{$95.3\mathbin{\pm}0.1$} & $93.7\mathbin{\pm}0.2$ & $-1.6\mathbin{\pm}0.1$ & \textbf{$95.4\mathbin{\pm}0.1$} & $93.7\mathbin{\pm}0.2$ & $-1.7\mathbin{\pm}0.1$ \\
PaDiM & $89.3\mathbin{\pm}0.1$ & \textbf{$89.6\mathbin{\pm}0.2$} & $+0.3\mathbin{\pm}0.1$ & $86.3\mathbin{\pm}0.3$ & \textbf{$86.5\mathbin{\pm}0.3$} & $+0.2\mathbin{\pm}0.0$ \\
PatchCore & \textbf{$94.3\mathbin{\pm}0.2$} & $93.9\mathbin{\pm}0.2$ & $-0.3\mathbin{\pm}0.3$ & \textbf{$93.7\mathbin{\pm}0.1$} & $93.0\mathbin{\pm}0.2$ & $-0.7\mathbin{\pm}0.3$ \\
SPADE & $83.7\mathbin{\pm}0.2$ & \textbf{$84.2\mathbin{\pm}0.5$} & $+0.5\mathbin{\pm}0.6$ & $83.4\mathbin{\pm}0.3$ & \textbf{$84.1\mathbin{\pm}0.5$} & $+0.7\mathbin{\pm}0.4$ \\
\midrule
\multicolumn{7}{@{}l}{\textit{AU-PRO$_{0.3}$}} \\
AnomalyDINO & \textbf{$92.4\mathbin{\pm}0.0$} & $92.0\mathbin{\pm}0.2$ & $-0.4\mathbin{\pm}0.2$ & \textbf{$92.4\mathbin{\pm}0.0$} & $92.1\mathbin{\pm}0.2$ & $-0.3\mathbin{\pm}0.2$ \\
PaDiM & $82.8\mathbin{\pm}0.0$ & \textbf{$82.9\mathbin{\pm}0.1$} & $+0.1\mathbin{\pm}0.1$ & $81.8\mathbin{\pm}0.1$ & \textbf{$82.0\mathbin{\pm}0.1$} & $+0.2\mathbin{\pm}0.0$ \\
PatchCore & \textbf{$87.8\mathbin{\pm}0.0$} & $86.5\mathbin{\pm}0.3$ & $-1.3\mathbin{\pm}0.3$ & \textbf{$88.5\mathbin{\pm}0.0$} & $87.5\mathbin{\pm}0.3$ & $-1.0\mathbin{\pm}0.3$ \\
SPADE & \textbf{$87.9\mathbin{\pm}0.0$} & $86.0\mathbin{\pm}0.1$ & $-1.9\mathbin{\pm}0.1$ & \textbf{$91.3\mathbin{\pm}0.0$} & $88.3\mathbin{\pm}0.1$ & $-3.1\mathbin{\pm}0.1$ \\
\bottomrule
\end{tabular}
\end{table*}

\section{Cross-Scene Transfer}
\label{sec:supp-scene-disjoint}

MVTec AD~2 can contain multiple lighting captures of the same physical scene.
The main protocol matches the intended deployment-calibration setting.
A few verified-normal images from the incoming lot calibrate scoring of the remainder of that lot.
Calibration and evaluation may therefore contain different lighting captures associated with the same physical scene.

We additionally evaluate cross-scene transfer by removing all captures associated with calibration-selected scenes from evaluation.
The resulting evaluation set contains only physical scenes not observed during calibration.
This complementary protocol tests whether a correction estimated from a few observed deployment scenes transfers to unseen scenes within the same shifted domain.

\Cref{tab:strict-scene-group} shows that, under cross-scene transfer, Image AUROC gains remain positive for six of the seven affected detectors, although they are smaller than under direct deployment-lot calibration.
Localization results are mixed, with clear gains for PaDiM, AnomalyDINO, and SubspaceAD and weaker or negative changes for WinCLIP and MuSc.

\FloatBarrier
\begin{table*}[!ht]
\caption{\textbf{Cross-scene transfer on MVTec AD~2.} Calibration-scene captures are excluded; values are category macro-averages reported as mean$\pm$sample SD over five paired seeds.}
\label{tab:strict-scene-group}
\centering
\setlength{\tabcolsep}{1.5pt}
\renewcommand{\arraystretch}{1.0}
\resizebox{\textwidth}{!}{%
\begin{tabular}{@{}ll|rrr|rrr|rrr@{}}
\toprule
& & \multicolumn{3}{c|}{\textbf{Image AUROC}} & \multicolumn{3}{c|}{\textbf{AU-PRO$_{0.05}$}} & \multicolumn{3}{c}{\textbf{AU-PRO$_{0.3}$}} \\
\cmidrule(lr){3-5}\cmidrule(lr){6-8}\cmidrule(lr){9-11}
\textbf{Backbone} & \textbf{Method} & base & +S & paired $\Delta$ & base & +S & paired $\Delta$ & base & +S & paired $\Delta$ \\
\midrule
WRN-50 & PatchCore \citep{roth2022patchcore} & 68.3$\pm$2.2 & \textbf{68.9$\pm$2.2} & +0.6$\pm$2.1 & 18.9$\pm$0.1 & \textbf{20.6$\pm$0.3} & +1.7$\pm$0.2 & 42.6$\pm$0.3 & \textbf{43.0$\pm$0.2} & +0.4$\pm$0.2 \\
WRN-50 & PaDiM \citep{defard2021padim} & 60.0$\pm$2.9 & \textbf{62.0$\pm$2.7} & +2.0$\pm$0.6 & 11.3$\pm$0.1 & \textbf{18.4$\pm$0.1} & +7.0$\pm$0.1 & 40.9$\pm$0.2 & \textbf{44.7$\pm$0.3} & +3.8$\pm$0.2 \\
WRN-50 & SPADE \citep{cohen2020spade} & 54.3$\pm$1.0 & \textbf{57.3$\pm$2.1} & +3.1$\pm$1.3 & \textbf{18.3$\pm$0.0} & 18.2$\pm$0.3 & $-$0.1$\pm$0.3 & 41.9$\pm$0.3 & \textbf{42.7$\pm$0.3} & +0.8$\pm$0.2 \\
DINOv2-S/14 & AnomalyDINO \citep{damm2025anomalydino} & 71.5$\pm$0.4 & \textbf{72.2$\pm$0.8} & +0.7$\pm$0.5 & 36.8$\pm$0.0 & \textbf{38.0$\pm$0.2} & +1.2$\pm$0.2 & 59.9$\pm$0.1 & \textbf{61.7$\pm$0.7} & +1.9$\pm$0.7 \\
DINOv2-G & SubspaceAD \citep{lendering2026subspacead} & 71.9$\pm$0.9 & \textbf{77.3$\pm$1.8} & +5.4$\pm$1.0 & 35.3$\pm$0.1 & \textbf{44.4$\pm$1.1} & +9.0$\pm$1.0 & 61.2$\pm$0.1 & \textbf{67.9$\pm$0.7} & +6.7$\pm$0.6 \\
CLIP-B/16 & WinCLIP \citep{jeong2023winclip} & \textbf{62.6$\pm$1.5} & 61.9$\pm$1.8 & $-$0.7$\pm$1.0 & 16.9$\pm$0.1 & \textbf{17.0$\pm$0.5} & +0.1$\pm$0.5 & \textbf{44.9$\pm$0.3} & 44.8$\pm$0.3 & $-$0.1$\pm$0.2 \\
CLIP-L/14 & MuSc \citep{li2024musc} & 63.0$\pm$1.3 & \textbf{63.3$\pm$1.4} & +0.3$\pm$0.7 & \textbf{28.1$\pm$0.5} & 26.0$\pm$0.3 & $-$2.0$\pm$0.4 & \textbf{54.1$\pm$0.2} & 51.8$\pm$0.4 & $-$2.2$\pm$0.4 \\
CLIP-L/14 & AnomalyCLIP$_{\mathrm{V}}$ \citep{zhou2024anomalyclip} & 52.1$\pm$1.0 & 52.1$\pm$1.0 & 0.0$\pm$0.0 & 25.1$\pm$0.1 & \textbf{25.3$\pm$0.7} & +0.2$\pm$0.7 & \textbf{45.1$\pm$0.3} & 44.9$\pm$0.7 & $-$0.2$\pm$0.8 \\
CLIP-L/14 & AnomalyCLIP$_{\mathrm{M}}$ \citep{zhou2024anomalyclip} & 56.6$\pm$0.8 & 56.6$\pm$0.8 & 0.0$\pm$0.0 & \textbf{25.0$\pm$0.1} & 24.9$\pm$0.2 & $-$0.1$\pm$0.2 & 42.5$\pm$0.2 & \textbf{42.6$\pm$0.4} & 0.0$\pm$0.6 \\
\bottomrule
\end{tabular}
}%
\end{table*}

\section{Further Analysis}
\label{sec:supp-mechanism}
\subsection{Removal Mode}
\label{sec:supp-mean-vs-subspace}
\Cref{tab:correction-mode-mean} extends the correction-mode comparison with a per-cell\,$+$\,mean variant, and \Cref{tab:dc-fraction} decomposes the removed disagreement energy. 
Two observations characterize the decision to omit mean removal.

First, the DC component accounts for $12$--$26\%$ of the combined DC and subspace energy across benchmarks, while the centered subspace accounts for the remainder (\Cref{tab:dc-fraction}).

Second, subtracting the mean is not merely insufficient but actively destructive for detectors that score by within-image geometry.
Adding centroid subtraction on top of SPARC collapses AnomalyDINO ($-34.5$~pp) and MuSc ($-55.9$~pp Image AUROC; \Cref{tab:correction-mode-mean}).
On Image AUROC, per-cell subspace projection is the only variant that improves every corrected detector.
On AU-PRO$_{0.3}$, the gains are broad but not universal.

\begin{table*}[!ht]
\caption{\textbf{Mean-removal variants.} Detector-level results for subspace and mean corrections at $k{=}8$.}
\label{tab:correction-mode-mean}
\centering
\setlength{\tabcolsep}{1.5pt}
\renewcommand{\arraystretch}{1.0}
\begin{tabular*}{\textwidth}{@{\extracolsep{\fill}}ll cc cc cc cc cc@{}}
\toprule
& & \multicolumn{2}{c}{\textbf{base}} & \multicolumn{2}{c}{\textbf{mean-only}} & \multicolumn{2}{c}{\textbf{\shortstack{global\\subspace}}} & \multicolumn{2}{c}{\textbf{\shortstack{per-cell SPARC\\(ours)}}} & \multicolumn{2}{c}{\textbf{\shortstack{SPARC\\+ mean}}} \\
\cmidrule(lr){3-4}\cmidrule(lr){5-6}\cmidrule(lr){7-8}\cmidrule(lr){9-10}\cmidrule(lr){11-12}
\textbf{Detector} & \textbf{Metric} & Score & $\Delta$ & Score & $\Delta$ & Score & $\Delta$ & Score & $\Delta$ & Score & $\Delta$ \\
\midrule
SubspaceAD & Image AUROC & 84.7 & +0.0 & 91.5 & +6.8 & 87.2 & +2.5 & 92.1 & +7.4 & \textbf{92.2} & +7.5 \\
 & AU-PRO$_{0.3}$ & 85.2 & +0.0 & 87.6 & +2.5 & 86.4 & +1.2 & 88.1 & +3.0 & \textbf{88.2} & +3.0 \\
 & AU-PRO$_{0.05}$ & 35.4 & +0.0 & 44.5 & +9.1 & 39.0 & +3.6 & 45.6 & +10.2 & \textbf{45.7} & +10.2 \\
\addlinespace[1.5pt]
AnomalyDINO & Image AUROC & 84.0 & +0.0 & 64.7 & -19.3 & 82.9 & -1.1 & \textbf{86.5} & +2.6 & 52.0 & -31.9 \\
 & AU-PRO$_{0.3}$ & 81.4 & +0.0 & 35.5 & -45.9 & 76.9 & -4.5 & \textbf{83.2} & +1.8 & 14.9 & -66.5 \\
 & AU-PRO$_{0.05}$ & 37.0 & +0.0 & 6.3 & -30.6 & 33.0 & -3.9 & \textbf{38.8} & +1.8 & 0.3 & -36.7 \\
\addlinespace[1.5pt]
PaDiM & Image AUROC & 77.1 & +0.0 & 77.1 & +0.0 & 78.0 & +0.9 & \textbf{82.9} & +5.8 & \textbf{82.9} & +5.8 \\
 & AU-PRO$_{0.3}$ & 70.3 & +0.0 & 70.3 & -0.0 & 71.7 & +1.3 & \textbf{72.3} & +2.0 & \textbf{72.3} & +2.0 \\
 & AU-PRO$_{0.05}$ & 11.3 & +0.0 & 11.3 & +0.0 & 15.2 & +3.9 & \textbf{17.1} & +5.8 & \textbf{17.1} & +5.8 \\
\addlinespace[1.5pt]
PatchCore & Image AUROC & 79.8 & +0.0 & 84.0 & +4.2 & 82.4 & +2.6 & \textbf{85.0} & +5.2 & 84.9 & +5.1 \\
 & AU-PRO$_{0.3}$ & 72.2 & +0.0 & 72.9 & +0.6 & \textbf{74.4} & +2.2 & 74.1 & +1.9 & 74.2 & +2.0 \\
 & AU-PRO$_{0.05}$ & 19.0 & +0.0 & 17.2 & -1.9 & 20.3 & +1.2 & \textbf{20.4} & +1.3 & 19.8 & +0.7 \\
\addlinespace[1.5pt]
MuSc & Image AUROC & 59.2 & +0.0 & 21.2 & -38.0 & 60.2 & +1.0 & \textbf{68.3} & +9.2 & 12.4 & -46.7 \\
 & AU-PRO$_{0.3}$ & \textbf{77.9} & +0.0 & 24.2 & -53.7 & 76.5 & -1.4 & 77.5 & -0.4 & 6.4 & -71.5 \\
 & AU-PRO$_{0.05}$ & 26.6 & +0.0 & 2.5 & -24.1 & 27.3 & +0.7 & \textbf{29.9} & +3.2 & 0.0 & -26.6 \\
\addlinespace[1.5pt]
AnomalyCLIP$_{\mathrm{V}}$ & Image AUROC & \textbf{73.3} & +0.0 & \textbf{73.3} & +0.0 & \textbf{73.3} & +0.0 & \textbf{73.3} & +0.0 & \textbf{73.3} & +0.0 \\
 & AU-PRO$_{0.3}$ & \textbf{73.3} & +0.0 & 56.8 & -16.5 & 70.5 & -2.8 & 72.8 & -0.5 & 54.3 & -19.0 \\
 & AU-PRO$_{0.05}$ & 25.2 & +0.0 & 9.0 & -16.3 & 22.5 & -2.7 & \textbf{25.5} & +0.3 & 11.1 & -14.2 \\
\addlinespace[1.5pt]
AnomalyCLIP$_{\mathrm{M}}$ & Image AUROC & \textbf{75.3} & +0.0 & \textbf{75.3} & +0.0 & \textbf{75.3} & +0.0 & \textbf{75.3} & +0.0 & \textbf{75.3} & +0.0 \\
 & AU-PRO$_{0.3}$ & 71.6 & +0.0 & 62.2 & -9.4 & 70.0 & -1.6 & \textbf{71.9} & +0.3 & 58.8 & -12.8 \\
 & AU-PRO$_{0.05}$ & 25.2 & +0.0 & 10.2 & -14.9 & 23.5 & -1.7 & \textbf{25.5} & +0.4 & 11.7 & -13.5 \\
\addlinespace[1.5pt]
SPADE & Image AUROC & 51.2 & +0.0 & 51.2 & +0.0 & 55.3 & +4.1 & \textbf{58.4} & +7.2 & \textbf{58.4} & +7.3 \\
 & AU-PRO$_{0.3}$ & 74.7 & +0.0 & 75.2 & +0.6 & \textbf{75.3} & +0.6 & \textbf{75.3} & +0.6 & 75.1 & +0.5 \\
 & AU-PRO$_{0.05}$ & 18.4 & +0.0 & \textbf{19.5} & +1.0 & 18.6 & +0.2 & \textbf{19.5} & +1.1 & 19.3 & +0.9 \\
\bottomrule
\end{tabular*}
\end{table*}

\subsection{Calibration Source}
\label{sec:supp-trainside}
SPARC calibrates on $k{\le}8$ verified-normal images from the incoming deployment lot. 
To confirm that the deployment origin of these normals drives the correction, we repeat the entire pipeline, drawing the $k$ normals from the training pool instead, holding the host detector, its source training, and the evaluation split fixed. 
\Cref{tab:trainside} reports both sources. 
Train-side calibration is close to inert across all six patch-feature detectors (pooled $-0.3$~pp Image AUROC, with small shifts in either direction), whereas deployment-lot calibration yields the main-result gains (pooled $+14.3$~pp).
Deployment-lot calibration outperforms training-pool calibration for all six detectors on both primary metrics after Holm correction.
The removable disagreement captured by SPARC is therefore specific to the deployment lot and is not exposed by training normals.

\subsection{Anomaly Signal}
\label{sec:supp-at2}

We directly measure two complementary quantities: how anomalous patches align with the calibration-defined subspace and how SPARC changes the feature-space contrast between anomalous and normal patches.

\subsubsection{Anomaly--Subspace Alignment}
At each test cell, let
\begin{equation}
g_{i,j}^{\mathrm{test}}
=
f_{i,j}^{\mathrm{test}}-\bar{\mu}_{i,j}.
\end{equation}
For an anomaly cell, defined as a cell intersecting the ground-truth anomaly mask of an anomalous image, we compute
\begin{equation}
\rho_{\mathrm{anom}}
=
\frac{
    \left\|V_{i,j}V_{i,j}^{\top}g_{i,j}^{\mathrm{test}}\right\|_2^2
}{
    \left\|g_{i,j}^{\mathrm{test}}\right\|_2^2
}.
\end{equation}
This quantity measures the fraction of the centered test feature's energy lying in the calibration-defined subspace
$\widehat{\mathcal{S}}_{i,j}$.
Because $V_{i,j}V_{i,j}^{\top}$ is an orthogonal projector, it is also the fraction of centered feature energy removed by the SPARC correction described in the main paper.

If anomaly directions were isotropic in $\mathbb{R}^{d}$, their expected overlap with a rank-$r$ subspace would be $r/d$.
Here, the diagnostic operates on the concatenated WRN-50 layer2+layer3 feature before detector-specific feature reduction or scoring, giving $d=1536$.
Since $r\leq k-1=7$ at $k{=}8$, the isotropic rank-capacity reference is at most
$7/1536\approx0.0046$ by \Cref{prop:capacity}.

As summarized in \Cref{tab:anomaly-direction-diagnostics}, the cell-pooled values of $\rho_{\mathrm{anom}}$ are $0.119$, $0.064$, $0.195$, and $0.072$ on MVTec AD~2, AeBAD-S, VisA, and RAD, respectively.
These values are $26.2$, $14.1$, $42.9$, and $15.8$ times this conservative upper reference.
Thus, anomaly directions overlap nontrivially with the calibration-defined subspace.

\subsubsection{Anomaly--Normal Contrast}
A complementary question is whether SPARC widens the feature-space separation between anomalous and normal cells.
For each cell, we use the calibration-centered Euclidean distances
\begin{align}
M_{\mathrm{Base}}
&=
\left\|g_{i,j}^{\mathrm{test}}\right\|_2,
\\
M_{\mathrm{SPARC}}
&=
\left\|
    \left(I-V_{i,j}V_{i,j}^{\top}\right)
    g_{i,j}^{\mathrm{test}}
\right\|_2.
\end{align}
We then define
\begin{equation}
\operatorname{contrast}(M)
=
\bar{M}_{\mathrm{anom}}
-
\bar{M}_{\mathrm{norm}},
\end{equation}
where $\bar{M}_{\mathrm{anom}}$ is the mean over ground-truth anomaly cells and $\bar{M}_{\mathrm{norm}}$ is the mean over normal cells.
The normal pool includes all cells from held-out normal images and cells outside the ground-truth anomaly mask in anomalous images.
We use Euclidean distance because estimating a nonsingular per-cell Mahalanobis metric is infeasible when $k\leq8\ll d$.
This is therefore a calibration-centered feature diagnostic rather than a detector-specific anomaly score.

We compute base and SPARC contrast by pooling the underlying cell-level sums and counts over all categories and five calibration seeds within each dataset.
As reported in \Cref{tab:anomaly-direction-diagnostics}, the widening is largest on the shift-prone benchmarks.
MVTec AD~2 increases from $0.749$ to $1.139$
($\Delta=+0.390$, $+52.1\%$ relative), while AeBAD-S increases from $1.576$ to $1.776$
($\Delta=+0.200$, $+12.7\%$).
The change is small on the shift-free benchmarks: VisA increases from $1.338$ to $1.342$
($\Delta=+0.004$, $+0.3\%$), while RAD increases from $1.693$ to $1.822$
($\Delta=+0.130$, $+7.7\%$).
Together, these measurements show that anomaly features overlap with the calibration-defined subspace and that SPARC widens the measured anomaly--normal feature contrast primarily on the shift-prone benchmarks.

\begin{table}[!ht]
\caption{\textbf{Disagreement-energy decomposition.} The DC fraction is the share of the combined DC and subspace energy attributable to the per-cell mean offset.}
\label{tab:dc-fraction}
\centering
\begin{tabular*}{\columnwidth}{@{\extracolsep{\fill}}lrrr@{}}
\toprule
Dataset & DC energy & Subspace energy & DC fraction \\
\midrule
MVTec AD~2 & 43.77 & 154.12 & 0.221 \\
AeBAD-S & 76.84 & 232.90 & 0.248 \\
VisA & 16.67 & 117.31 & 0.124 \\
RAD & 65.69 & 184.73 & 0.262 \\
\bottomrule
\end{tabular*}
\end{table}

\begin{table}[!ht]
\caption{\textbf{Calibration-source control.} One-sided Wilcoxon signed-rank tests compare deployment-lot and training-pool calibration under the same paired protocol. Holm correction is applied jointly across six detectors and two primary metrics.}
\label{tab:trainside}
\centering
\setlength{\tabcolsep}{2.1pt}
\renewcommand{\arraystretch}{1.02}
\begin{tabular*}{\columnwidth}{@{\extracolsep{\fill}}lrrrrr@{}}
\toprule
\textbf{Method} & Deploy $\Delta$ & Train $\Delta$ & Difference & Raw $p$ & Holm $p$ \\
\midrule
\multicolumn{6}{@{}l}{\textit{Image AUROC}} \\
PatchCore & +10.6 & -2.5 & +13.1 & $<$.001$^{***}$ & .003$^{**}$ \\
PaDiM & +14.8 & -0.0 & +14.8 & $<$.001$^{***}$ & .003$^{**}$ \\
SPADE & +13.2 & +1.2 & +12.0 & $<$.001$^{***}$ & .003$^{**}$ \\
AnomalyDINO & +7.5 & -0.3 & +7.7 & .001$^{**}$ & .003$^{**}$ \\
SubspaceAD & +14.8 & -0.2 & +15.0 & $<$.001$^{***}$ & .003$^{**}$ \\
MuSc & +24.8 & +0.2 & +24.6 & $<$.001$^{***}$ & .003$^{**}$ \\
\midrule
\multicolumn{6}{@{}l}{\textit{AU-PRO$_{0.3}$}} \\
PatchCore & +2.4 & -0.2 & +2.5 & .002$^{**}$ & .003$^{**}$ \\
PaDiM & +4.5 & +0.1 & +4.4 & $<$.001$^{***}$ & .003$^{**}$ \\
SPADE & +3.2 & +0.1 & +3.1 & $<$.001$^{***}$ & .003$^{**}$ \\
AnomalyDINO & +3.5 & -0.6 & +4.1 & $<$.001$^{***}$ & .003$^{**}$ \\
SubspaceAD & +6.5 & -0.6 & +7.0 & $<$.001$^{***}$ & .003$^{**}$ \\
MuSc & +2.7 & -1.3 & +4.0 & $<$.001$^{***}$ & .003$^{**}$ \\
\bottomrule
\end{tabular*}
\end{table}

\begin{table*}[!ht]
\caption{\textbf{Anomaly--subspace alignment and anomaly--normal contrast.} $\Delta$ is SPARC minus base. Ratios and changes are computed before rounding; displayed entries are rounded independently. VisA uses the center-crop geometry of the primary experiment; the full-frame control is reported in \Cref{tab:visa-fullframe-comparison}. Feature setup, the $7/1536$ reference, and the contrast definition are given in \Cref{sec:supp-at2}.}
\label{tab:anomaly-direction-diagnostics}
\centering
\setlength{\tabcolsep}{2pt}
\renewcommand{\arraystretch}{1.05}
\begin{tabular}{lrrrrrr}
\toprule
& \multicolumn{2}{c}{Anomaly--subspace alignment} & \multicolumn{4}{c}{Anomaly--normal contrast} \\
\cmidrule(lr){2-3}\cmidrule(lr){4-7}
Dataset & $\rho_{\mathrm{anom}}$ & $\rho_{\mathrm{anom}}/(7/1536)$ & base & SPARC & $\Delta$ & Rel.\ $\Delta$ \\
\midrule
MVTec AD~2 & 0.119 & 26.2$\times$ & 0.749 & 1.139 & +0.390 & +52.1\%  \\
AeBAD-S & 0.064 & 14.1$\times$ & 1.576 & 1.776 & +0.200 & +12.7\%  \\
VisA & 0.195 & 42.9$\times$ & 1.338 & 1.342 & +0.004 & +0.3\%  \\
RAD & 0.072 & 15.8$\times$ & 1.693 & 1.822 & +0.130 & +7.7\%  \\
\bottomrule
\end{tabular}

\end{table*}

\section{Statistical Analysis}
\label{sec:supp-statistics}
\Cref{tab:wilcoxon_holm_main} tests the seven detectors whose image scores depend on corrected patch features on the two primary metrics over the 12 shift-prone categories.
The 14 detector-metric hypotheses form one Holm family.
All seven Image AUROC gains remain significant after correction.
Five AU-PRO$_{0.3}$ gains remain significant, while the MuSc and WinCLIP gains do not.
\Cref{tab:maintable-variability} reports variability across the five calibration draws.

\begin{table*}[!ht]
\caption{\textbf{Primary shift-prone significance tests.} One-sided Wilcoxon signed-rank tests over seed-averaged category changes on MVTec AD~2 and AeBAD-S. Holm correction is applied jointly across the seven detectors and two primary metrics.}
\label{tab:wilcoxon_holm_main}
\centering
\setlength{\tabcolsep}{2.4pt}
\renewcommand{\arraystretch}{0.90}
\begin{tabular*}{\columnwidth}{@{\extracolsep{\fill}}lrrrrr@{}}
\toprule
\textbf{Method} & $N$ & Mean $\Delta$ & Median $\Delta$ & Raw $p$ & Holm $p$ \\
\midrule
\multicolumn{6}{@{}l}{\textit{Image AUROC}} \\
PatchCore & 12 & +10.62 & +6.86 & .001$^{**}$ & .006$^{**}$ \\
PaDiM & 12 & +14.78 & +13.22 & $<$.001$^{***}$ & .003$^{**}$ \\
SPADE & 12 & +13.18 & +11.14 & $<$.001$^{***}$ & .003$^{**}$ \\
AnomalyDINO & 12 & +7.47 & +6.87 & $<$.001$^{***}$ & .003$^{**}$ \\
SubspaceAD & 12 & +14.77 & +13.36 & $<$.001$^{***}$ & .003$^{**}$ \\
WinCLIP & 12 & +11.09 & +9.74 & .001$^{**}$ & .006$^{**}$ \\
MuSc & 12 & +24.81 & +23.28 & $<$.001$^{***}$ & .003$^{**}$ \\
\midrule
\multicolumn{6}{@{}l}{\textit{AU-PRO$_{0.3}$}} \\
PatchCore & 12 & +2.36 & +2.08 & $<$.001$^{***}$ & .003$^{**}$ \\
PaDiM & 12 & +4.55 & +2.58 & $<$.001$^{***}$ & .003$^{**}$ \\
SPADE & 12 & +3.21 & +1.29 & $<$.001$^{***}$ & .003$^{**}$ \\
AnomalyDINO & 12 & +3.46 & +2.76 & .001$^{**}$ & .006$^{**}$ \\
SubspaceAD & 12 & +6.47 & +4.57 & $<$.001$^{***}$ & .003$^{**}$ \\
WinCLIP & 12 & +1.40 & +1.03 & .039$^{*}$ & .064 \\
MuSc & 12 & +2.68 & +2.17 & .032$^{*}$ & .064 \\
\bottomrule
\end{tabular*}
\end{table*}
\begin{table*}[!ht]
\caption{\textbf{Calibration-draw variability.} Mean paired changes and sample standard deviations over five seeds.}
\label{tab:maintable-variability}
\centering
\setlength{\tabcolsep}{2.2pt}
\renewcommand{\arraystretch}{0.88}
\begin{tabular}{@{}l cc cc cc cc@{}}
\toprule
& \multicolumn{2}{c}{\textbf{MVTec AD~2}} & \multicolumn{2}{c}{\textbf{AeBAD-S}} & \multicolumn{2}{c}{\textbf{VisA}} & \multicolumn{2}{c}{\textbf{RAD}} \\
\cmidrule(lr){2-3}\cmidrule(lr){4-5}\cmidrule(lr){6-7}\cmidrule(lr){8-9}
\textbf{Method} & Mean $\Delta$ & Seed SD & Mean $\Delta$ & Seed SD & Mean $\Delta$ & Seed SD & Mean $\Delta$ & Seed SD \\
\midrule
\multicolumn{9}{@{}l}{\textit{Image AUROC}} \\
SubspaceAD & +14.9 & 0.8 & +14.5 & 1.0 & +0.1 & 0.1 & +0.1 & 0.1 \\
AnomalyDINO & +10.3 & 1.4 & +1.8 & 0.4 & -1.6 & 0.1 & -0.2 & 0.1 \\
PaDiM & +21.7 & 2.9 & +1.0 & 0.5 & +0.3 & 0.1 & +0.3 & 0.1 \\
PatchCore & +14.8 & 2.0 & +2.2 & 1.3 & -0.3 & 0.3 & +4.0 & 0.4 \\
MuSc & +33.1 & 3.1 & +8.2 & 1.9 & -1.4 & 0.4 & -3.2 & 1.5 \\
WinCLIP & +16.4 & 1.3 & +0.6 & 1.1 & +1.1 & 0.3 & +3.0 & 0.7 \\
AnomalyCLIP$_{\mathrm{V}}$ & +0.0 & 0.0 & +0.0 & 0.0 & +0.0 & 0.0 & +0.0 & 0.0 \\
AnomalyCLIP$_{\mathrm{M}}$ & +0.0 & 0.0 & +0.0 & 0.0 & +0.0 & 0.0 & +0.0 & 0.0 \\
SPADE & +17.2 & 2.3 & +5.2 & 1.0 & +0.5 & 0.6 & +5.8 & 0.8 \\
\midrule
\multicolumn{9}{@{}l}{\textit{AU-PRO$_{0.3}$}} \\
SubspaceAD & +8.0 & 0.8 & +3.4 & 0.1 & -0.2 & 0.0 & +0.6 & 0.2 \\
AnomalyDINO & +3.8 & 0.4 & +2.8 & 0.4 & -0.4 & 0.2 & +1.1 & 0.1 \\
PaDiM & +6.5 & 1.2 & +0.6 & 0.1 & +0.1 & 0.1 & +0.9 & 0.1 \\
PatchCore & +2.9 & 1.9 & +1.2 & 0.2 & -1.3 & 0.3 & +4.7 & 1.3 \\
MuSc & +3.7 & 1.3 & +0.7 & 0.7 & -2.7 & 0.2 & -3.3 & 0.4 \\
WinCLIP & +2.1 & 0.8 & -0.0 & 0.5 & +0.8 & 0.2 & +1.1 & 0.5 \\
AnomalyCLIP$_{\mathrm{V}}$ & -0.1 & 0.5 & -1.8 & 0.6 & -0.2 & 0.1 & +0.0 & 0.9 \\
AnomalyCLIP$_{\mathrm{M}}$ & +1.2 & 0.3 & -0.7 & 0.1 & +0.1 & 0.0 & +0.6 & 0.5 \\
SPADE & +4.6 & 0.7 & +0.5 & 0.1 & -1.9 & 0.1 & -0.6 & 0.2 \\
\midrule
\multicolumn{9}{@{}l}{\textit{AU-PRO$_{0.05}$}} \\
SubspaceAD & +10.2 & 1.1 & \multicolumn{2}{c}{--} & \multicolumn{2}{c}{--} & \multicolumn{2}{c}{--} \\
AnomalyDINO & +1.8 & 0.5 & \multicolumn{2}{c}{--} & \multicolumn{2}{c}{--} & \multicolumn{2}{c}{--} \\
PaDiM & +5.8 & 3.1 & \multicolumn{2}{c}{--} & \multicolumn{2}{c}{--} & \multicolumn{2}{c}{--} \\
PatchCore & +1.3 & 1.8 & \multicolumn{2}{c}{--} & \multicolumn{2}{c}{--} & \multicolumn{2}{c}{--} \\
MuSc & +3.2 & 0.6 & \multicolumn{2}{c}{--} & \multicolumn{2}{c}{--} & \multicolumn{2}{c}{--} \\
WinCLIP & +0.7 & 0.6 & \multicolumn{2}{c}{--} & \multicolumn{2}{c}{--} & \multicolumn{2}{c}{--} \\
AnomalyCLIP$_{\mathrm{V}}$ & +0.3 & 0.3 & \multicolumn{2}{c}{--} & \multicolumn{2}{c}{--} & \multicolumn{2}{c}{--} \\
AnomalyCLIP$_{\mathrm{M}}$ & +0.4 & 0.3 & \multicolumn{2}{c}{--} & \multicolumn{2}{c}{--} & \multicolumn{2}{c}{--} \\
SPADE & +1.1 & 0.1 & \multicolumn{2}{c}{--} & \multicolumn{2}{c}{--} & \multicolumn{2}{c}{--} \\
\bottomrule
\end{tabular}
\end{table*}

\section{Per-Category Results}
\label{sec:supp-categories}
Per-category Image AUROC and AU-PRO$_{0.3}$ are reported for MVTec AD~2 in \Cref{tab:supp-percat-mvtecad2-auroc,tab:supp-percat-mvtecad2-aupro}, AeBAD-S in \Cref{tab:supp-percat-aebad-auroc,tab:supp-percat-aebad-aupro}, RAD in \Cref{tab:supp-percat-rad-auroc,tab:supp-percat-rad-aupro}, and VisA in \Cref{tab:supp-percat-visa-auroc,tab:supp-percat-visa-aupro}.
Each table reports every detector--category pair at $k{=}8$, averaged over five calibration draws.
Together, they expand the dataset-level means reported in the Experiments section of the main paper into their per-category components.
The baselines do not decompose by detector, so we report them separately: \Cref{tab:supp-baseline-percat} provides the per-category FastRecon and FastRef scores underlying \Cref{tab:comparison-baselines-auroc,tab:comparison-baselines-aupro}.

\section{Qualitative Results}
\label{sec:supp-qualitative}
In \Cref{fig:qual-mvtec2,fig:qual-visa-p1,fig:qual-visa-p2}, we show SPARC-corrected anomaly maps for every detector on one representative test image per category of MVTec AD~2 and VisA.

\FloatBarrier

\begin{table*}[!ht]
\caption{\textbf{Per-category few-shot baselines.} Entries are FastRecon/FastRef at $k{=}8$.}
\label{tab:supp-baseline-percat}
\centering
\setlength{\tabcolsep}{3pt}
\renewcommand{\arraystretch}{0.85}
\begin{tabular}{@{}lrrr@{}}
\toprule
Category & Image AUROC & AU-PRO$_{0.05}$ & AU-PRO$_{0.3}$ \\
\midrule
\multicolumn{4}{@{}l}{\textit{MVTec AD~2}} \\
can & 55.7/64.8 & 4.9/10.5 & 26.3/32.9 \\
fabric & 67.8/73.2 & 3.5/3.4 & 18.3/18.0 \\
fruit\_jelly & 78.1/94.5 & 32.4/35.7 & 60.5/63.5 \\
rice & 75.4/81.8 & 17.9/22.0 & 40.3/43.0 \\
sheet\_metal & 91.2/92.0 & 9.5/9.5 & 26.2/25.5 \\
vial & 81.4/96.0 & 38.3/54.0 & 69.8/85.0 \\
wallplugs & 75.5/77.2 & 9.7/12.9 & 35.4/39.3 \\
walnuts & 78.9/89.8 & 23.1/33.5 & 56.0/63.8 \\
\midrule
\multicolumn{4}{@{}l}{\textit{AeBAD-S}} \\
background & 56.1/69.3 & --/-- & 80.1/84.1 \\
illumination & 62.1/73.9 & --/-- & 78.9/81.1 \\
same & 57.9/66.7 & --/-- & 81.0/83.7 \\
view & 52.2/62.2 & --/-- & 75.2/80.5 \\
\midrule
\multicolumn{4}{@{}l}{\textit{RAD}} \\
bolt & 98.0/97.7 & --/-- & 80.4/85.5 \\
ribbon & 98.3/98.1 & --/-- & 83.2/86.8 \\
sponge & 97.9/98.0 & --/-- & 86.6/91.5 \\
tape & 98.6/98.7 & --/-- & 96.5/97.5 \\
\midrule
\multicolumn{4}{@{}l}{\textit{VisA}} \\
candle & 84.0/92.7 & --/-- & 86.0/94.2 \\
capsules & 52.7/60.7 & --/-- & 37.5/49.8 \\
cashew & 87.3/92.9 & --/-- & 83.1/87.9 \\
chewinggum & 94.6/98.2 & --/-- & 75.3/79.1 \\
fryum & 62.9/89.6 & --/-- & 53.2/76.1 \\
macaroni1 & 69.4/83.4 & --/-- & 80.6/90.0 \\
macaroni2 & 56.9/64.8 & --/-- & 53.7/61.7 \\
pcb1 & 64.2/82.2 & --/-- & 49.2/67.6 \\
pcb2 & 60.9/75.5 & --/-- & 57.9/70.0 \\
pcb3 & 67.0/75.6 & --/-- & 51.9/69.9 \\
pcb4 & 73.3/92.5 & --/-- & 43.8/77.2 \\
pipe\_fryum & 90.7/96.5 & --/-- & 86.8/93.0 \\
\bottomrule
\end{tabular}
\end{table*}
\begin{table*}[!ht]
\caption{\textbf{Per-category Image AUROC on MVTec AD~2.} Results use $k{=}8$.}
\label{tab:supp-percat-mvtecad2-auroc}
\centering
\setlength{\tabcolsep}{2.2pt}
\renewcommand{\arraystretch}{0.90}
\resizebox{\textwidth}{!}{%
\begin{tabular}{llrrrrrrrrr}
\toprule
Category & Variant & PatchCore & PaDiM & SPADE & AnomalyDINO & SubspaceAD & WinCLIP & MuSc & AnomalyCLIP$_{\mathrm{V}}$ & AnomalyCLIP$_{\mathrm{M}}$ \\
\midrule
can & base & 49.7 & 49.7 & 50.0 & 52.5 & 50.0 & 24.2 & 53.0 & 57.2 & 55.2 \\
can & $+\,$SPARC & 60.2 & 58.9 & 63.1 & 63.5 & 68.3 & 44.8 & 69.8 & 57.2 & 55.2 \\
fabric & base & 52.0 & 28.2 & 32.5 & 56.0 & 64.8 & 58.8 & 34.9 & 58.3 & 75.0 \\
fabric & $+\,$SPARC & 73.2 & 65.1 & 63.9 & 70.5 & 77.4 & 86.4 & 71.4 & 58.3 & 75.0 \\
fruit\_jelly & base & 89.2 & 79.2 & 70.8 & 86.5 & 83.4 & 88.1 & 38.5 & 66.1 & 67.3 \\
fruit\_jelly & $+\,$SPARC & 96.8 & 96.4 & 95.4 & 98.9 & 99.8 & 92.9 & 93.1 & 66.1 & 67.3 \\
rice & base & 54.7 & 39.3 & 49.9 & 81.5 & 82.2 & 47.6 & 66.5 & 74.5 & 74.1 \\
rice & $+\,$SPARC & 80.4 & 78.6 & 52.3 & 93.7 & 94.6 & 69.7 & 88.3 & 74.5 & 74.1 \\
sheet\_metal & base & 77.5 & 79.0 & 50.5 & 83.8 & 78.9 & 71.4 & 41.4 & 62.7 & 56.6 \\
sheet\_metal & $+\,$SPARC & 99.7 & 97.8 & 76.8 & 97.6 & 97.9 & 89.0 & 97.0 & 62.7 & 56.6 \\
vial & base & 91.4 & 69.7 & 68.2 & 90.5 & 88.4 & 71.7 & 69.4 & 27.9 & 57.6 \\
vial & $+\,$SPARC & 93.9 & 89.3 & 87.3 & 96.4 & 99.7 & 90.4 & 99.2 & 27.9 & 57.6 \\
wallplugs & base & 50.8 & 46.4 & 41.2 & 43.1 & 49.1 & 59.6 & 42.4 & 39.7 & 38.7 \\
wallplugs & $+\,$SPARC & 73.6 & 72.2 & 46.9 & 48.0 & 57.1 & 71.9 & 67.2 & 39.7 & 38.7 \\
walnuts & base & 84.5 & 88.2 & 67.3 & 79.5 & 69.9 & 81.8 & 65.1 & 33.6 & 42.2 \\
walnuts & $+\,$SPARC & 90.6 & 94.8 & 81.9 & 87.3 & 91.4 & 89.0 & 90.1 & 33.6 & 42.2 \\
\bottomrule
\end{tabular}
}%
\par\vspace{0.4em}
\caption{\textbf{Per-category AU-PRO$_{0.3}$ on MVTec AD~2.} Results use $k{=}8$.}
\label{tab:supp-percat-mvtecad2-aupro}
\centering
\setlength{\tabcolsep}{2.2pt}
\resizebox{\textwidth}{!}{%
\begin{tabular}{llrrrrrrrrr}
\toprule
Category & Variant & PatchCore & PaDiM & SPADE & AnomalyDINO & SubspaceAD & WinCLIP & MuSc & AnomalyCLIP$_{\mathrm{V}}$ & AnomalyCLIP$_{\mathrm{M}}$ \\
\midrule
can & base & 22.2 & 23.1 & 29.5 & 24.8 & 26.7 & 43.9 & 45.1 & 9.6 & 9.8 \\
can & $+\,$SPARC & 27.4 & 31.5 & 41.3 & 34.7 & 33.4 & 43.9 & 42.0 & 8.5 & 8.3 \\
fabric & base & 22.6 & 27.9 & 13.4 & 50.6 & 62.4 & 33.8 & 33.6 & 34.4 & 26.3 \\
fabric & $+\,$SPARC & 24.6 & 34.3 & 15.5 & 58.5 & 68.0 & 39.0 & 45.0 & 41.4 & 31.2 \\
fruit\_jelly & base & 62.7 & 60.1 & 66.7 & 65.2 & 78.8 & 56.7 & 63.1 & 54.9 & 57.1 \\
fruit\_jelly & $+\,$SPARC & 63.4 & 62.2 & 67.4 & 68.6 & 82.3 & 58.8 & 65.0 & 54.1 & 57.7 \\
rice & base & 38.8 & 49.3 & 37.6 & 72.7 & 74.7 & 32.1 & 54.4 & 51.9 & 44.6 \\
rice & $+\,$SPARC & 45.1 & 52.3 & 44.9 & 71.5 & 77.1 & 35.3 & 63.2 & 54.3 & 48.2 \\
sheet\_metal & base & 22.6 & 21.6 & 19.2 & 69.0 & 39.4 & 19.4 & 25.5 & 31.2 & 29.8 \\
sheet\_metal & $+\,$SPARC & 23.8 & 22.5 & 21.0 & 73.9 & 48.1 & 21.2 & 27.9 & 29.1 & 29.7 \\
vial & base & 85.1 & 63.2 & 88.6 & 89.9 & 94.9 & 85.4 & 89.9 & 67.8 & 64.1 \\
vial & $+\,$SPARC & 87.5 & 84.6 & 89.3 & 93.2 & 96.1 & 86.8 & 92.3 & 70.4 & 64.6 \\
wallplugs & base & 31.9 & 32.4 & 28.4 & 30.9 & 34.1 & 33.6 & 45.1 & 37.8 & 38.9 \\
wallplugs & $+\,$SPARC & 34.9 & 38.6 & 37.3 & 31.6 & 60.2 & 37.6 & 47.0 & 33.2 & 40.2 \\
walnuts & base & 56.5 & 50.7 & 52.6 & 76.3 & 79.5 & 54.9 & 67.6 & 73.3 & 70.1 \\
walnuts & $+\,$SPARC & 59.3 & 54.3 & 55.8 & 77.7 & 89.2 & 54.0 & 71.0 & 69.1 & 70.9 \\
\bottomrule
\end{tabular}
}%
\par\vspace{0.6em}
\centering
\setlength{\tabcolsep}{2.2pt}
\renewcommand{\arraystretch}{0.90}
\par\vspace{0.3em}
\caption{\textbf{Per-category Image AUROC on AeBAD-S.} Results use $k{=}8$.}
\label{tab:supp-percat-aebad-auroc}
\centering
\setlength{\tabcolsep}{2.2pt}
\resizebox{\textwidth}{!}{%
\begin{tabular}{llrrrrrrrrr}
\toprule
Category & Variant & PatchCore & PaDiM & SPADE & AnomalyDINO & SubspaceAD & WinCLIP & MuSc & AnomalyCLIP$_{\mathrm{V}}$ & AnomalyCLIP$_{\mathrm{M}}$ \\
\midrule
background & base & 70.5 & 56.7 & 28.9 & 68.3 & 74.4 & 72.0 & 23.2 & 79.2 & 76.2 \\
background & $+\,$SPARC & 67.8 & 57.5 & 34.9 & 70.9 & 88.5 & 72.6 & 32.8 & 79.2 & 76.2 \\
illumination & base & 64.6 & 68.7 & 41.9 & 75.3 & 53.2 & 73.9 & 35.9 & 76.6 & 71.5 \\
illumination & $+\,$SPARC & 70.3 & 69.6 & 51.1 & 77.4 & 89.7 & 75.8 & 41.2 & 76.6 & 71.5 \\
same & base & 67.4 & 61.0 & 53.3 & 75.9 & 91.3 & 75.8 & 39.5 & 77.3 & 74.5 \\
same & $+\,$SPARC & 69.4 & 61.8 & 56.1 & 78.7 & 93.1 & 76.3 & 47.5 & 77.3 & 74.5 \\
view & base & 53.8 & 60.0 & 29.2 & 67.2 & 75.6 & 81.6 & 24.3 & 82.4 & 78.7 \\
view & $+\,$SPARC & 57.8 & 61.2 & 32.2 & 66.8 & 81.0 & 80.8 & 34.4 & 82.4 & 78.7 \\
\bottomrule
\end{tabular}
}%
\par\vspace{0.4em}
\caption{\textbf{Per-category AU-PRO$_{0.3}$ on AeBAD-S.} Results use $k{=}8$.}
\label{tab:supp-percat-aebad-aupro}
\centering
\setlength{\tabcolsep}{2.2pt}
\resizebox{\textwidth}{!}{%
\begin{tabular}{llrrrrrrrrr}
\toprule
Category & Variant & PatchCore & PaDiM & SPADE & AnomalyDINO & SubspaceAD & WinCLIP & MuSc & AnomalyCLIP$_{\mathrm{V}}$ & AnomalyCLIP$_{\mathrm{M}}$ \\
\midrule
background & base & 89.1 & 83.5 & 90.7 & 85.8 & 89.5 & 82.7 & 76.5 & 85.8 & 89.6 \\
background & $+\,$SPARC & 89.0 & 83.7 & 90.9 & 88.1 & 92.6 & 82.3 & 77.6 & 80.0 & 87.6 \\
illumination & base & 83.9 & 81.3 & 87.4 & 75.6 & 83.7 & 82.1 & 79.5 & 89.0 & 90.6 \\
illumination & $+\,$SPARC & 86.1 & 81.8 & 87.7 & 82.8 & 93.1 & 82.7 & 84.4 & 88.9 & 90.4 \\
same & base & 85.2 & 83.7 & 90.5 & 90.5 & 93.8 & 83.8 & 80.0 & 89.5 & 90.5 \\
same & $+\,$SPARC & 86.6 & 84.1 & 91.0 & 91.9 & 94.1 & 83.7 & 80.1 & 88.3 & 89.6 \\
view & base & 82.4 & 79.6 & 87.2 & 83.8 & 90.0 & 81.8 & 71.4 & 88.1 & 89.0 \\
view & $+\,$SPARC & 83.6 & 80.8 & 88.0 & 84.1 & 90.9 & 81.6 & 68.3 & 87.9 & 89.1 \\
\bottomrule
\end{tabular}
}%
\end{table*}

\begin{table*}[!t]
\caption{\textbf{RAD per-category Image AUROC ($k{=}8$).}}
\label{tab:supp-percat-rad-auroc}
\centering
\setlength{\tabcolsep}{2.2pt}
\renewcommand{\arraystretch}{0.85}
\resizebox{\textwidth}{!}{%
\begin{tabular}{llrrrrrrrrr}
\toprule
Category & Variant & PatchCore & PaDiM & SPADE & AnomalyDINO & SubspaceAD & WinCLIP & MuSc & AnomalyCLIP$_{\mathrm{V}}$ & AnomalyCLIP$_{\mathrm{M}}$ \\
\midrule
bolt & base & 79.1 & 92.5 & 13.0 & 95.1 & 95.6 & 75.2 & 75.8 & 76.1 & 87.5 \\
bolt & $+\,$SPARC & 90.8 & 93.9 & 15.8 & 94.9 & 96.0 & 81.8 & 67.5 & 76.1 & 87.5 \\
ribbon & base & 96.6 & 99.2 & 77.0 & 98.2 & 96.6 & 85.1 & 81.0 & 98.7 & 92.7 \\
ribbon & $+\,$SPARC & 97.4 & 99.1 & 84.8 & 97.9 & 96.7 & 86.1 & 78.7 & 98.7 & 92.7 \\
sponge & base & 95.5 & 99.2 & 7.2 & 97.2 & 96.0 & 68.1 & 55.4 & 43.8 & 36.6 \\
sponge & $+\,$SPARC & 97.6 & 99.2 & 13.1 & 97.0 & 96.0 & 72.9 & 59.1 & 43.8 & 36.6 \\
tape & base & 97.3 & 99.2 & 18.5 & 98.2 & 96.8 & 93.4 & 49.9 & 99.9 & 97.7 \\
tape & $+\,$SPARC & 98.8 & 99.2 & 25.2 & 98.0 & 96.9 & 92.9 & 43.9 & 99.9 & 97.7 \\
\bottomrule
\end{tabular}
}%

\vspace{0.3em}
\caption{\textbf{RAD per-category AU-PRO$_{0.3}$ ($k{=}8$).}}
\label{tab:supp-percat-rad-aupro}
\centering
\setlength{\tabcolsep}{2.2pt}
\renewcommand{\arraystretch}{0.85}
\resizebox{\textwidth}{!}{%
\begin{tabular}{llrrrrrrrrr}
\toprule
Category & Variant & PatchCore & PaDiM & SPADE & AnomalyDINO & SubspaceAD & WinCLIP & MuSc & AnomalyCLIP$_{\mathrm{V}}$ & AnomalyCLIP$_{\mathrm{M}}$ \\
\midrule
bolt & base & 83.4 & 82.5 & 95.6 & 88.3 & 97.5 & 60.9 & 96.4 & 77.2 & 79.7 \\
bolt & $+\,$SPARC & 87.9 & 84.1 & 94.8 & 92.1 & 97.6 & 64.2 & 95.8 & 76.7 & 80.3 \\
ribbon & base & 47.6 & 69.6 & 75.3 & 92.4 & 95.6 & 52.9 & 91.3 & 88.5 & 79.7 \\
ribbon & $+\,$SPARC & 56.4 & 69.9 & 75.5 & 92.4 & 95.7 & 51.9 & 88.8 & 88.0 & 80.2 \\
sponge & base & 69.1 & 60.1 & 56.1 & 79.3 & 85.6 & 24.6 & 79.4 & 43.1 & 15.3 \\
sponge & $+\,$SPARC & 73.6 & 61.2 & 54.6 & 79.5 & 87.6 & 25.7 & 70.5 & 44.5 & 16.5 \\
tape & base & 92.9 & 89.6 & 92.6 & 97.5 & 97.2 & 84.1 & 95.5 & 97.6 & 93.4 \\
tape & $+\,$SPARC & 93.7 & 90.0 & 92.2 & 97.9 & 97.4 & 85.3 & 94.4 & 97.2 & 93.3 \\
\bottomrule
\end{tabular}
}%
\par\vspace{0.6em}
\caption{\textbf{Per-category Image AUROC on VisA.} Results use $k{=}8$.}
\label{tab:supp-percat-visa-auroc}
\centering
\setlength{\tabcolsep}{2.2pt}
\renewcommand{\arraystretch}{0.88}
\resizebox{\textwidth}{!}{%
\begin{tabular}{llrrrrrrrrr}
\toprule
Category & Variant & PatchCore & PaDiM & SPADE & AnomalyDINO & SubspaceAD & WinCLIP & MuSc & AnomalyCLIP$_{\mathrm{V}}$ & AnomalyCLIP$_{\mathrm{M}}$ \\
\midrule
candle & base & 98.6 & 92.9 & 89.4 & 96.0 & 96.5 & 90.8 & 92.4 & 81.2 & 88.0 \\
candle & $+\,$SPARC & 98.9 & 92.9 & 89.0 & 97.1 & 96.7 & 89.8 & 94.1 & 81.2 & 88.0 \\
capsules & base & 76.9 & 70.8 & 70.6 & 97.0 & 98.4 & 64.2 & 84.6 & 82.7 & 91.3 \\
capsules & $+\,$SPARC & 76.0 & 71.0 & 70.3 & 97.2 & 98.4 & 64.9 & 79.3 & 82.7 & 91.3 \\
cashew & base & 96.8 & 95.2 & 99.2 & 97.8 & 98.8 & 83.6 & 92.4 & 76.3 & 96.0 \\
cashew & $+\,$SPARC & 96.6 & 95.6 & 98.6 & 94.7 & 98.8 & 84.2 & 87.2 & 76.3 & 96.0 \\
chewinggum & base & 98.5 & 98.2 & 92.9 & 98.6 & 99.5 & 86.9 & 97.4 & 97.2 & 97.9 \\
chewinggum & $+\,$SPARC & 98.5 & 98.4 & 92.5 & 98.1 & 99.4 & 88.5 & 96.0 & 97.2 & 97.9 \\
fryum & base & 93.0 & 87.8 & 89.8 & 98.1 & 99.1 & 73.5 & 97.3 & 93.0 & 95.9 \\
fryum & $+\,$SPARC & 94.1 & 88.4 & 89.7 & 97.1 & 99.1 & 75.8 & 94.7 & 93.0 & 95.9 \\
macaroni1 & base & 97.5 & 87.9 & 59.1 & 88.1 & 98.6 & 59.4 & 87.8 & 86.5 & 88.5 \\
macaroni1 & $+\,$SPARC & 94.2 & 87.9 & 65.6 & 85.0 & 98.6 & 60.0 & 85.8 & 86.5 & 88.5 \\
macaroni2 & base & 76.2 & 77.1 & 64.5 & 83.4 & 97.2 & 53.5 & 59.0 & 73.0 & 78.5 \\
macaroni2 & $+\,$SPARC & 77.5 & 77.7 & 62.4 & 84.2 & 97.3 & 58.5 & 59.7 & 73.0 & 78.5 \\
pcb1 & base & 98.2 & 92.4 & 90.5 & 96.4 & 97.6 & 85.2 & 87.2 & 85.3 & 91.4 \\
pcb1 & $+\,$SPARC & 97.8 & 92.6 & 91.1 & 92.2 & 97.5 & 86.2 & 86.6 & 85.3 & 91.4 \\
pcb2 & base & 97.6 & 87.8 & 92.6 & 94.1 & 96.0 & 61.7 & 87.2 & 62.4 & 69.3 \\
pcb2 & $+\,$SPARC & 96.7 & 88.2 & 92.5 & 87.6 & 96.1 & 62.9 & 84.7 & 62.4 & 69.3 \\
pcb3 & base & 98.3 & 85.5 & 79.3 & 97.4 & 96.7 & 60.4 & 94.7 & 61.6 & 76.6 \\
pcb3 & $+\,$SPARC & 97.4 & 85.9 & 79.6 & 94.8 & 97.1 & 61.2 & 92.5 & 61.6 & 76.6 \\
pcb4 & base & 99.6 & 99.0 & 91.3 & 98.5 & 98.7 & 92.4 & 92.0 & 94.1 & 97.8 \\
pcb4 & $+\,$SPARC & 99.9 & 99.0 & 91.3 & 98.0 & 98.9 & 92.9 & 95.1 & 94.1 & 97.8 \\
pipe\_fryum & base & 99.9 & 97.4 & 85.1 & 98.4 & 99.8 & 93.7 & 96.8 & 92.4 & 95.2 \\
pipe\_fryum & $+\,$SPARC & 99.6 & 97.7 & 87.4 & 98.1 & 99.8 & 94.0 & 96.0 & 92.4 & 95.2 \\
\bottomrule
\end{tabular}
}%
\par\vspace{0.4em}
\caption{\textbf{Per-category AU-PRO$_{0.3}$ on VisA.} Results use $k{=}8$.}
\label{tab:supp-percat-visa-aupro}
\centering
\setlength{\tabcolsep}{2.2pt}
\resizebox{\textwidth}{!}{%
\begin{tabular}{llrrrrrrrrr}
\toprule
Category & Variant & PatchCore & PaDiM & SPADE & AnomalyDINO & SubspaceAD & WinCLIP & MuSc & AnomalyCLIP$_{\mathrm{V}}$ & AnomalyCLIP$_{\mathrm{M}}$ \\
\midrule
candle & base & 97.1 & 96.4 & 97.0 & 96.9 & 97.8 & 94.1 & 97.3 & 93.7 & 94.8 \\
candle & $+\,$SPARC & 97.0 & 96.4 & 96.4 & 96.7 & 97.6 & 94.0 & 97.4 & 93.6 & 94.6 \\
capsules & base & 72.5 & 57.5 & 76.1 & 96.4 & 98.4 & 51.2 & 85.5 & 78.6 & 83.4 \\
capsules & $+\,$SPARC & 69.8 & 57.8 & 74.3 & 95.8 & 98.2 & 50.9 & 84.2 & 78.4 & 81.3 \\
cashew & base & 93.6 & 85.2 & 93.7 & 94.9 & 98.2 & 82.6 & 93.7 & 90.4 & 94.7 \\
cashew & $+\,$SPARC & 91.5 & 85.0 & 91.8 & 95.3 & 98.2 & 82.4 & 89.2 & 91.6 & 94.5 \\
chewinggum & base & 82.4 & 82.5 & 73.4 & 87.1 & 92.4 & 67.7 & 81.3 & 80.9 & 84.8 \\
chewinggum & $+\,$SPARC & 84.7 & 82.3 & 72.3 & 87.6 & 92.0 & 70.4 & 78.8 & 81.6 & 85.4 \\
fryum & base & 87.9 & 85.6 & 90.0 & 89.5 & 94.0 & 79.9 & 92.9 & 84.4 & 89.5 \\
fryum & $+\,$SPARC & 86.5 & 85.6 & 87.1 & 91.3 & 94.0 & 79.6 & 92.3 & 86.8 & 89.3 \\
macaroni1 & base & 96.3 & 92.4 & 97.2 & 91.9 & 98.1 & 75.9 & 96.1 & 88.6 & 91.2 \\
macaroni1 & $+\,$SPARC & 96.0 & 92.5 & 95.3 & 92.6 & 98.0 & 75.8 & 93.7 & 90.9 & 92.4 \\
macaroni2 & base & 66.3 & 63.8 & 68.1 & 96.3 & 98.9 & 50.9 & 85.3 & 88.5 & 88.2 \\
macaroni2 & $+\,$SPARC & 61.8 & 63.8 & 66.7 & 93.3 & 98.9 & 54.9 & 72.1 & 82.4 & 87.1 \\
pcb1 & base & 95.0 & 87.5 & 95.7 & 94.3 & 96.4 & 60.4 & 93.1 & 76.8 & 77.9 \\
pcb1 & $+\,$SPARC & 93.2 & 87.7 & 94.1 & 94.3 & 96.1 & 63.9 & 91.2 & 74.7 & 76.5 \\
pcb2 & base & 90.1 & 85.4 & 90.8 & 89.2 & 93.8 & 66.2 & 87.1 & 70.7 & 76.6 \\
pcb2 & $+\,$SPARC & 90.3 & 85.6 & 90.5 & 88.8 & 93.8 & 67.1 & 85.9 & 71.4 & 79.6 \\
pcb3 & base & 87.6 & 82.6 & 88.7 & 86.0 & 94.2 & 77.4 & 91.9 & 66.8 & 76.7 \\
pcb3 & $+\,$SPARC & 86.8 & 82.9 & 86.9 & 83.7 & 93.8 & 77.8 & 90.2 & 66.7 & 78.0 \\
pcb4 & base & 89.5 & 83.0 & 88.1 & 89.3 & 92.9 & 82.0 & 91.7 & 87.1 & 89.3 \\
pcb4 & $+\,$SPARC & 85.5 & 82.9 & 81.2 & 87.7 & 92.4 & 81.8 & 89.3 & 86.0 & 89.3 \\
pipe\_fryum & base & 95.5 & 92.3 & 96.2 & 96.7 & 97.7 & 94.5 & 97.3 & 94.0 & 96.9 \\
pipe\_fryum & $+\,$SPARC & 94.6 & 92.3 & 95.4 & 96.7 & 97.7 & 94.1 & 97.4 & 93.7 & 97.0 \\
\bottomrule
\end{tabular}
}%
\par\vspace{0.8em}
\end{table*}

\FloatBarrier
\begin{sidewaysfigure*}[p]
\centering
\includegraphics{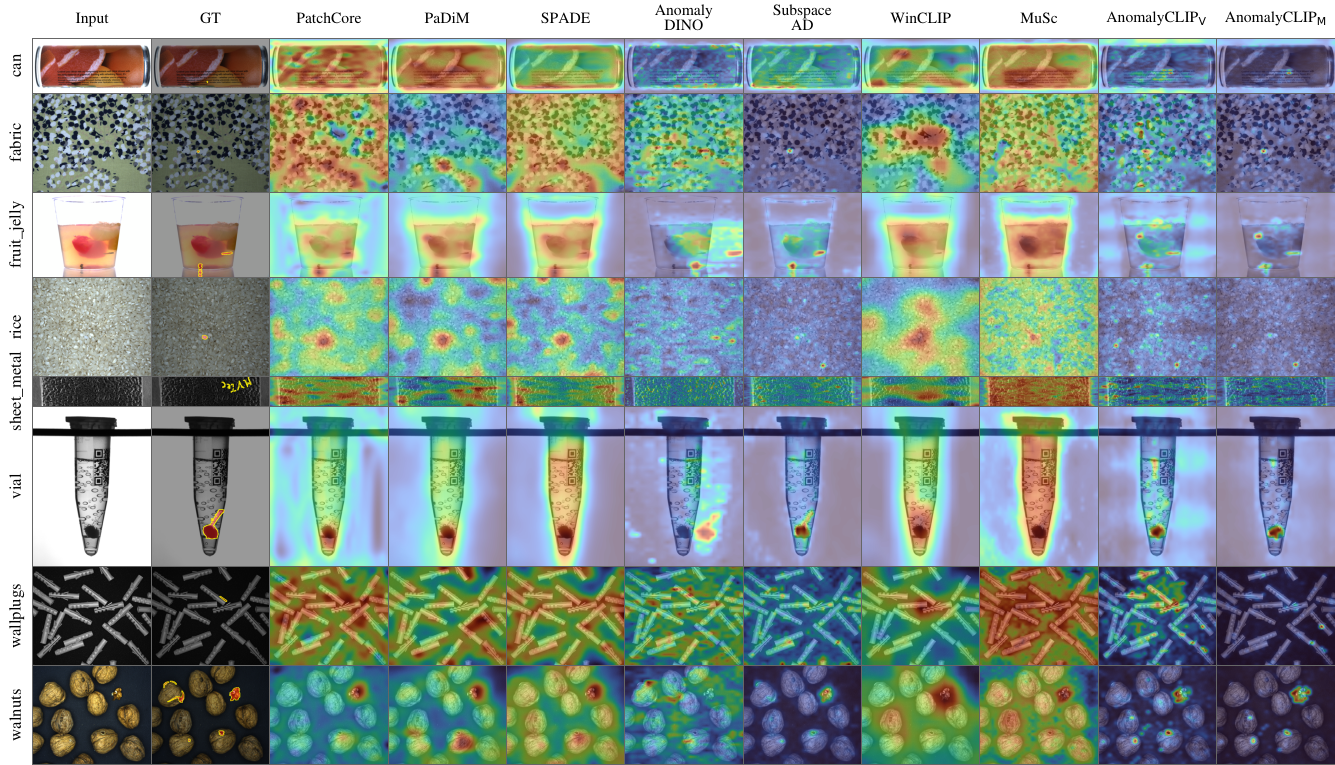}%
\caption{SPARC-corrected qualitative localization on MVTec AD~2. All anomaly maps use SPARC-corrected features; the nine detector columns are min-max normalized independently per image and detector and rendered with a shared turbo colormap. Quantitative results are reported in the main tables. For each category, the displayed test image is the one selected by lower median GT-mask area; filenames are sorted in ascending order on ties ($k{=}8$, seed=0).}
\label{fig:qual-mvtec2}
\end{sidewaysfigure*}

\begin{sidewaysfigure*}[p]\centering
\includegraphics{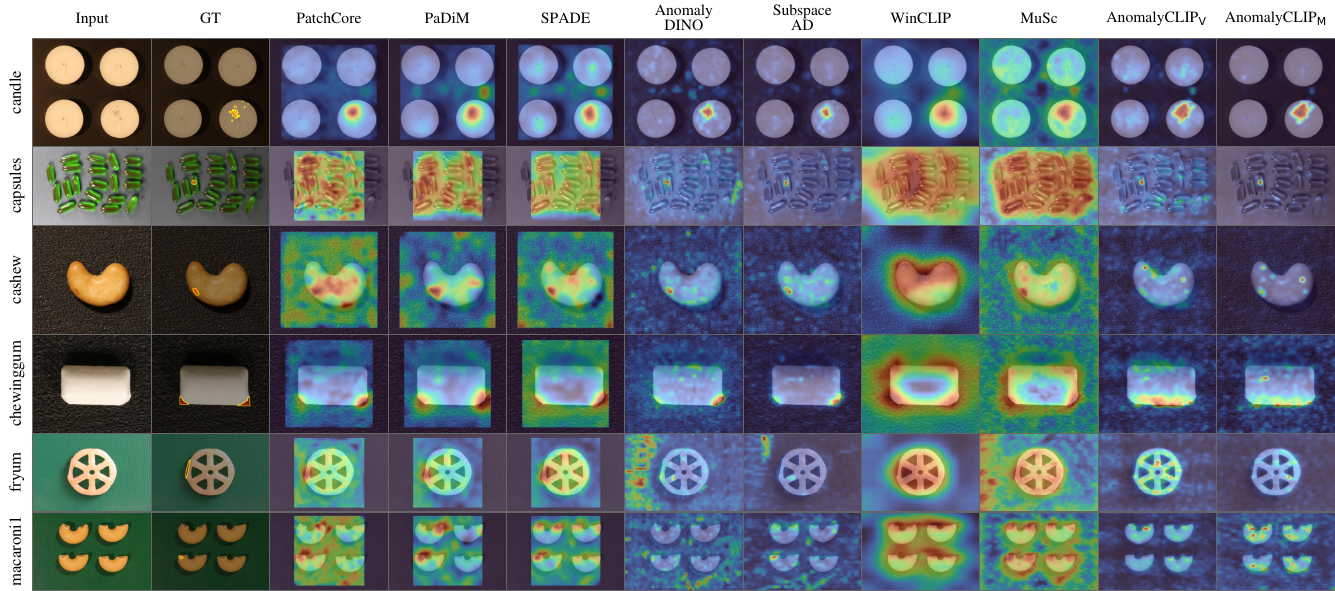}%
\caption{SPARC-corrected qualitative localization on VisA. All anomaly maps use SPARC-corrected features; the nine detector columns are min-max normalized independently per image and detector and rendered with a shared turbo colormap. Quantitative results are reported in the main tables. For each category, the displayed test image is the one selected by lower median GT-mask area; filenames are sorted in ascending order on ties ($k{=}8$, seed=0). (page 1 of 2)}
\label{fig:qual-visa-p1}
\end{sidewaysfigure*}

\begin{sidewaysfigure*}[p]\centering
\includegraphics{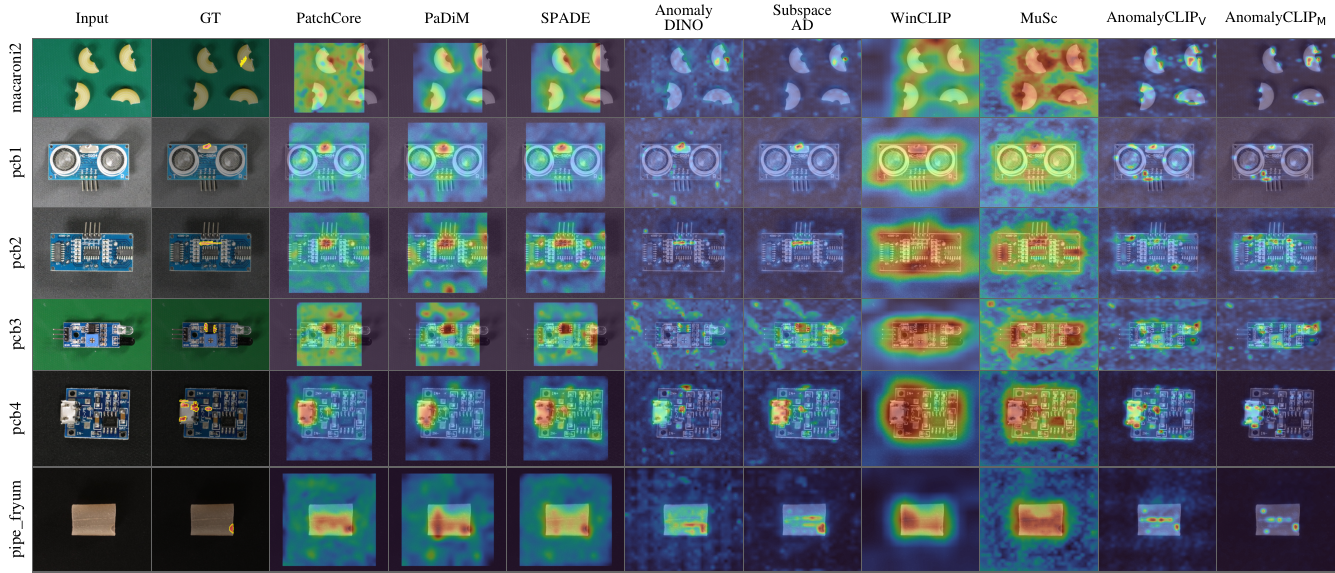}%
\caption{SPARC-corrected qualitative localization on VisA. (page 2 of 2)}
\label{fig:qual-visa-p2}
\end{sidewaysfigure*}

\end{document}